\definecolor{linkblue}{rgb}{0.1,0.1,0.8}
\newcommand{\assign}{\leftarrow}
\newtheorem{theorem}{Theorem}
\newtheorem{lemma}[theorem]{Lemma}
\newtheorem{corollary}[theorem]{Corollary}
\newtheorem{definition}[theorem]{Definition}
\newcommand{\ignore}[1]{}
\newcommand{\N}{\mathbb{N}}
\newcommand{\R}{\mathbb{R}}
\renewcommand{\epsilon}{\varepsilon}
\newcommand{\eps}{\varepsilon}
\newcommand{\p}{\vec{p}}
\newcommand{\q}{\vec{q}}
\DeclareMathOperator{\E}{E}
\newcommand{\onemax}{\textsc{OneMax}\xspace}
\newcommand{\OneMax}{\textsc{OneMax}\xspace}
\newcommand{\OM}{\textsc{Om}\xspace}
\newcommand{\LeadingOnes}{\textsc{LeadingOnes}\xspace}
\newcommand{\leadingones}{\textsc{LeadingOnes}\xspace}
\newcommand{\LO}{\textsc{Lo}\xspace}
\newcommand{\oea}{$(1 + 1)$~EA\xspace}
\newcommand{\OEAi}{$(1 + 1)$~EA$_i$\xspace}
\newcommand{\OEAp}{$(1 + 1)$~EA$_{\p}$\xspace}
\newcommand{\OEAq}{$(1 + 1)$~EA$_{\p}$\xspace}
\newcommand{\OEAQ}{$(1 + 1)$~EA$_{Q}$\xspace}
\newcommand{\TrunkGeo}{\mathrm{TrunkGeo}}
\newcommand{\Geo}{\mathrm{Geo}}
\begin{document}
    
\title{Solving Problems with Unknown Solution Length at (Almost) No Extra Cost}

\author{
Benjamin Doerr, \'Ecole Polytechnique, Paris-Saclay, France\\
Carola Doerr, CNRS and Sorbonne Universit\'es, UPMC Univ Paris 06, Paris, France\\
Timo K{\"o}tzing, Friedrich-Schiller-Universit{\"a}t, Jena, Germany
}
\date{This is a preliminary version of a paper that is to appear at the Genetic and Evolutionary Computation Conference (GECCO 2015).}

\maketitle

{\sloppy
\begin{abstract}
Most research in the theory of evolutionary computation assumes that the problem at hand has a fixed problem size. This assumption does not always apply to real-world optimization challenges, where the length of an optimal solution may be unknown a priori.
 
Following up on previous work of Cathabard, Lehre, and Yao [FOGA 2011] we analyze variants of the (1+1) evolutionary algorithm for problems with unknown solution length. For their setting, in which the solution length is sampled from a geometric distribution, we provide mutation rates that yield an expected optimization time that is of the same order as that of the (1+1) EA knowing the solution length.

We then show that almost the same run times can be achieved even if \emph{no} a priori information on the solution length is available.

Finally, we provide mutation rates suitable for settings in which neither the solution length nor the positions of the relevant bits are known. Again we obtain almost optimal run times for the \textsc{OneMax} and \textsc{LeadingOnes} test functions, thus solving an open problem from Cathabard et al. 
\end{abstract} 


\section{Introduction}%
\label{sec:introduction}
While the theory for evolutionary algorithms (EAs) in static problem settings is well developed~\cite{AugerD11,Jansen13,NeumannW10}, a topic that is not so well studied in the theory of EA literature is the performance of EAs in uncertain environments. Uncertainty can have many faces, for example with respect to function evaluations, the variation operators, or the dynamics of the fitness function. Understanding how evolutionary search algorithms can tackle such uncertain environments is an emerging research topic; see~\cite{Bia-Dor-Gam-Gut:j:09} for a survey on examples in combinatorial optimization, but also~\cite{JinBranke:j:05:robustSurvey} for an excellent survey also discussing different sources of uncertainty. 

In this work we study what evolutionary algorithms can achieve in the presence of uncertainty with respect to the solution length. Quite surprisingly, we show that already some variants of the simplest evolutionary algorithm, the \oea, can be very efficient for such problems.

\subsection{Previous Work}
Our work builds on previous work of Cathabard, Lehre, and Yao~\cite{PKLFoga11}, who were the first to consider, from a theoretical point of view, evolutionary algorithms in environments with unknown solution lengths. Cathabard et al.\ assume that the solution length is sampled from a fixed and known distribution $D$ with finite support. More precisely, they assume that the solution length $n$ is sampled from a truncated version of the geometric distribution, in which the probability mass for values greater than some threshold $N$ is shifted to the event that $n=N$. In this situation, the algorithm designer has access to both the upper bound $N$ for the solution length and the success probability $q$ of the distribution.

Cathabard et al.\ analyze a variant of the \oea in which each bit is flipped with probability $1/N$ and they also study a variant with non-uniform bit-flip probabilities. In the latter, the $i$-th bit is flipped independently of all other bits with probability $1/(i+1)$. They show that these variants have polynomial expected run times on \onemax and \leadingones function, where the expectation is taken with respect to the solution length and the random decisions of the algorithm. An overview of the precise bounds obtained in~\cite{PKLFoga11} is given in Table~\ref{tab:PKLsetting}.

\subsection{Our Results}

We extend the work of Cathabard et al.\ in several ways. In a first step (Section~\ref{sec:RandomSolutionLength}) we show that the regarded mutation probabilities are sub-optimal. Making use of the concentration of the (truncated) geometric distribution, we design bit flip probabilities that yield significantly smaller expected run times (for both the \onemax and the \leadingones function). We complement this finding by a lower bound that shows the optimality of our result. This proves that no mutation probabilities can yield a performance that is better by more than a constant factor than our suggested ones.

While in the setting of Cathabard et al.\ we are in the convenient situation that we have full knowledge of the distribution $D$ from which the solution length is sampled, one is sometimes faced with problems for which this knowledge is not readily available. 
We therefore study in Section~\ref{sec:ArbitrarySolutionLength} what can be done without any a priori knowledge about the solution length. In this situation we require that the algorithm designer chooses bit flip probabilities $(p_i)_{i\in \N}$ such that, regardless of the solution length $n$, the expected performance of the \oea with bit flip probabilities $(p_1, \ldots, p_n)$ is as small as possible. It is not obvious that this can be done in polynomial time. In fact, for both algorithms studied by Cathabard et al.\ as well as for any uniform choice of the bit flip probabilities, the expected run time on this problem is exponential in $n$ (cf. Theorems~\ref{thm:ourNegOneMax} and~\ref{thm:ourNegLeadingOnes}). 

We show (Theorems~\ref{thm:ourReformOneMax} and~\ref{thm:ourReformLeadingOnes}) that not only can we tackle this problem with non-uniform bit flip probabilities, but, quite surprisingly, this can be even done in a way that yields almost optimal run times. Indeed, our results are only a $\log^{1+\eps}n$ factor worse than the best possible $\Theta(n \log n)$ and $\Theta(n^2)$ run time bounds for \onemax and \leadingones, respectively. This factor can be made even smaller as we shall comment at the end of Section~\ref{sec:ArbitraryLengthNonUniformBitFlipProb}.

Finally, we provide in Section~\ref{sec:ArbitraryLengthRandomBitFlipProb} a second way to deal with unknown solution lengths. We provide an alternative variant of the \oea in which the bit flip probabilities are chosen according to some (fixed) distribution at the beginning of each iteration. For suitably chosen distributions $Q$, the expected run times of the respective \OEAQ on \onemax and \leadingones are of the same asymptotic order as those of the previously suggested solution with non-uniform bit flip probabilities. In particular, they are, simultaneously for all possible solution lengths $n$, almost of the same order as the expected run time of a best possible \oea knowing the solution length.

This second approach has an advantage over the non-uniform bit flip probabilities in that it effectively ignores bits that do not contribute anything to the fitness function (\emph{irrelevant bits}). Thus, even if only $n$ bits at unknown positions have an influence on the fitness function, the same run time bounds apply. In contrast, all previously suggested solutions require that the $n$ relevant bits are the leftmost ones. This also answers a question posed by Cathabard et al.~\cite[Section~6]{PKLFoga11}.

Our run time results are summarized in Tables~\ref{table:overviewResults} and~\ref{tab:PKLsetting}. 

\begin{table*}%
\begin{center}
\renewcommand{\arraystretch}{2}
\begin{tabular}{l|c||cl|cl}
		 \textbf{setting} & \textbf{bit flip prob.} & \multicolumn{2}{c|}{\textbf{\OneMax}} & \multicolumn{2}{c}{\textbf{\LeadingOnes}} \\
\hline
\parbox{3cm}{Random Length $\sim \Geo(q)$}  & unif.\ and fixed & $O(q^{-1} \log q^{-1})$ & Thm.~\ref{thm:OurRandomLengthOneMax} &  $O(q^{-2})$ & Thm.~\ref{thm:OurRandomLengthLeadingOnes}\\
\hline
		& unif.\ and fixed & $2^{\Omega(n)}$ & Thm.~\ref{thm:ourNegOneMax} &  $2^{\Omega(n)}$& Thm.~\ref{thm:ourNegLeadingOnes} \\
\cline{2-6}
\parbox{3cm}{Adversarial Length}  & fixed & $O(n\log^{2+\eps} n)$ & 
Cor.~\ref{cor:nonUniformBitFlipProb} & $O(n^2\log^{1+\eps} n)$ & 
Cor.~\ref{cor:nonUniformBitFlipProb}\\
\cline{2-6}
		& unif.\ and rand.\ & $O(n\log^{2+\eps} n)$ & 
		Cor.~\ref{cor:RandomBitFlipProbability} & $O(n^2\log^{1+\eps} n)$ & 
		Cor.~\ref{cor:RandomBitFlipProbability}\\
\hline
\end{tabular}
\end{center}
\caption{Overview of Results for $1/N<q<1/2$ and $\eps>0$.}
\label{table:overviewResults}
\end{table*}

\section{Algorithms and Problems}

In this section we define the algorithms and problems considered in this paper. For any problem size $n$, fitness function $f: \{0,1\}^n \rightarrow \R$, and vector $\p=(p_1,\ldots,p_n)$ of bit flip probabilities $0\leq p_i \leq 1$, we consider the \OEAp, as given by Algorithm~\ref{alg:PKLmodel}.

\begin{algorithm2e}%
	\textbf{Initialization:} 
	Sample $x \in \{0,1\}^n$ uniformly at random and query $f(x)$\;
  \textbf{Optimization:}
	\For{$t=1,2,3,\ldots$}{
		\For{$i=1, \ldots, n$}
			{\label{line:mut}With probability $p_i$ set $y_i\assign 1-x_i$ and set $y_i \assign x_i$ otherwise\;}
		Query $f(y)$\;
		\lIf{$f(y)\geq f(x)$}{$x \assign y$\;}	
}
\caption{The \OEAp for $\p=(p_1,\ldots,p_n)$ optimizing a pseudo-Boolean function $f:\{0,1\}^n \rightarrow \R$.}
\label{alg:PKLmodel}
\end{algorithm2e}
 
The \OEAp samples an initial search point from $\{0,1\}^n$ uniformly at random. It then proceeds in rounds, each of which consists of a mutation and a selection step. Throughout the whole optimization process the \OEAp maintains a population size of one, and the individual in this population is always a best-so-far solution.
In the \emph{mutation step} of the \OEAp the current-best solution $x$ is \emph{mutated} by flipping the bit in position $i$ with probability $p_i$, $1 \leq i \leq n$. The fitness of the resulting search point $y$ is evaluated and in the \emph{selection step} the parent $x$ is replaced by its offspring $y$ if and only if the fitness of $y$ is at least as good as the one of $x$. Since we consider maximization problems here, this is the case if $f(y) \geq f(x)$. Since we are interested in expected \emph{run times}, i.e., the expected number of rounds it takes until the \OEAp evaluates for the first time a solution of maximal fitness, we do not specify a termination criterion. 
It is not difficult to see that the \OEAp indeed generalized the standard \oea. In fact, we obtain the \oea from the \OEAp if we set $p_i=1/n$ for all $i \in [n]:=\{1, \ldots, n\}$. We call such mutation vectors with $p_i=p_j$ for all $i,j$ \emph{uniform} mutation rates, while we speak of \emph{non-uniform} mutation rates if $p_i\neq p_j$ for at least one pair $(i,j)$.

The two test functions we consider in this work are \OneMax and \leadingones. For a given problem size $n$, they are defined as
\begin{align*}
\OM_n
& :=\onemax_n(x)=\sum_{i=1}^n{x_i}, \text{ and}\\
\LO_n
& :=\leadingones_n(x)\\
& =\max\{i \in [0..n] \mid \forall j \leq i: x_j=1\},
\end{align*}
where $[0..n]:=\{0\} \cup [n]$. That is, the \onemax function counts the number of ones in a bit string, while the \leadingones function counts the number of initial ones. While these two functions are certainly easy to optimize without evolutionary algorithms, the \OEAp performs exactly the same on all generalized \onemax and \leadingones functions, which are obtained from the functions above through an XOR of an arbitrary and unknown bit string $z \in \{0,1\}^n$. Understanding how an evolutionary algorithm behaves on these two functions is an important indicator for how it manages to cope with the easier parts of more complex optimization problems. \onemax and \leadingones functions are for this reason the two best-studied problems in the theory of evolutionary computation literature.

If a distribution $D$ is known from which the solution length is sampled we consider the expected run time of the 
\OEAp on $\onemax_D$ and $\leadingones_D$, respectively, which are the problems $\OM_n$ resp. $\LO_n$ with random solution length $n \sim D$. Note here that the expectation is thus taken both with respect to the random solution length and with respect to the random samples of the algorithm.


\section{Random Solution Length}
\label{sec:RandomSolutionLength}

We first consider the setting that has been introduced by Cathabard, Lehre, and Yao~\cite{PKLFoga11}. 
After a short presentation of the model in Section~\ref{sec:PKLsetting}, a general lower bound for this problem (Section~\ref{sec:PKLlowerbound}), and the results of~\cite{PKLFoga11} in Section~\ref{sec:knownResults}, 
we show that the bounds in~\cite{PKLFoga11} can be improved by using different (uniform) mutation rates (Section~\ref{sec:PKLour}).

Table~\ref{tab:PKLsetting} summarizes the previously known bounds and our contributions for the setting regarded in this section.

\begin{table}%
\begin{center}
\renewcommand{\arraystretch}{1.5}
\begin{tabular}{l|c|c|c|c|c}
 & 
 & \multicolumn{2}{|c|}{\textbf{Results from~\cite{PKLFoga11}}}& \multicolumn{2}{|c}{\textbf{Thms.~\ref{thm:OurRandomLengthOneMax} (\textsc{OM})}, \textbf{\ref{thm:OurRandomLengthLeadingOnes} (\textsc{LO})}} \\
\textbf{problem}& \textbf{Cor.~\ref{cor:lowerBoundOnTrunkated}}& $p_i=1/N$ & $p_i=1/(i+1)$ & $p_i=q/2$ & $p_i=q$\\
\hline
$\onemax_D$ & $\Omega\left(\frac{1}{q} \log \frac{1}{q}\right)$ & $\Theta\left(N \log \frac{1}{q}\right)$ & $O\left(\frac{1}{q^2} \log N\right)$ & $\Theta\left(\frac{1}{q} \log \frac{1}{q}\right)$ & $\Theta\left(N \log N\right)$ \\
$\leadingones_D$ & $\Omega\left(\frac{1}{q^2}\right)$& $\Theta\left(\frac{N}{q}\right)$ & $\Theta\left(\frac{1}{q^3}\right)$ & $\Theta\left(\frac{1}{q^2}\right)$ & $\Theta\left(\frac{N}{q}\right)$\\
\hline
\end{tabular}
\end{center}
\caption{Expected run times of the \OEAp with $\p=(p_i)_{i=1}^N$ for $D=\TrunkGeo(N,q)$ and $1/N \leq q \leq 1/2$}
\label{tab:PKLsetting}
\end{table}

\subsection{The Model}
\label{sec:PKLsetting}

Cathabard et al.~\cite{PKLFoga11} consider the following model. 
The algorithm designer knows the distribution $D$ from which the unknown solution length is drawn; only distributions with finite support are considered, so the algorithm designer knows an upper bound $N$ on the actual solution length $n$. 
He also knows the class of functions from which the optimization problem is taken (for example \OneMax or \LeadingOnes). 

Based on this knowledge, the algorithm designer chooses a vector $(p_1,\ldots, p_N)$ of bit flip probabilities indicating with which probability a bit is flipped in each round. In this work we also regard a slightly more general model in which the distributions over $\N$ may possibly have infinite support; the algorithm designer then chooses an infinite sequence of bit flip probabilities $(p_1, p_2,\ldots)= (p_i)_{i \in \N}$. 
After this choice of bit flip probabilities, the actual solution length $n$ is sampled from the given distribution $D$. Then the \OEAp (Algorithm~\ref{alg:PKLmodel}) is run with mutation probabilities $\p=(p_1, \ldots, p_n)$ on the given problem with the given problem length.

Cathabard et al.~\cite{PKLFoga11} consider as distribution $D$ the following \emph{truncated geometric distribution}, based on a geometric distribution where the probability mass for values greater than $n$ are moved to $n$.

\begin{definition}[{\cite{PKLFoga11}}]
The \emph{truncated geometric distribution} $\TrunkGeo(N,q)$ with truncation parameter $N$ and success probability $q \in (0,1/N]$ satisfies, 
for all $n \in \N$, that the probability of $\TrunkGeo(N,q)=n$ is
\begin{align*}
\begin{cases}
q(1-q)^{n-1} &\text{ if } 1 \leq n \leq N-1,\\
(1-q)^{n-1} &\text{ if } n=N,\\
0 &\text{ otherwise.}
\end{cases}
\end{align*} 
\end{definition}
Note that the truncated geometric distribution recovers the geometric distribution $\Geo(q)$ for $N = \infty$. 

It is well known, respectively can be found in~\cite[Proposition 1]{PKLFoga11}, that for $X=\Geo(q)$ and $Y=\TrunkGeo(N,q)$ with $q \geq 1 / N$
\begin{align}
\label{eq:expectedValue}
\E[X]=q^{-1} \text{ and } \E[Y]=\Theta(q^{-1}).
\end{align}
Note that we trivially have $\E[Y] \leq \E[X]$.

\subsection{A General Lower Bound}
\label{sec:PKLlowerbound}

What is a good lower bound for the expected run time of \emph{any} \OEAp on \OneMax or \LeadingOnes when the length is sampled from some given distribution $D$ on $\N$? 
If the algorithm designer would know the true length $n$ before he has to decide upon the mutation probabilities $(p_1, \ldots, p_n)$, then the optimal bit flip probability for this solution length could be chosen. For \OneMax, the best choice is to set $\p = (1/n, \ldots, 1/n)$ as has been proven in~\cite{Sudholt13, Witt13j} (note here that for fixed problem sizes, due to the symmetry of \onemax, non-uniform mutation rates cannot be advantageous over uniform ones). This results in an expected run time of $\Theta(n \log n)$.

 For \LeadingOnes, if the true length $n$ is known, any setting of the bit-flip probabilities leads to an expected run time of $\Omega(n^2)$ regardless of the choice of $\p$, as the next lemma shows. 

\begin{lemma}
\label{LEM:LOLOWERBOUND}
For any fixed solution length $n$ and any vector $\p=(p_1, \ldots, p_n)$ of mutation probabilities, the expected run time of the \OEAp on $\leadingones_n$ is $\Omega(n^2)$.
\end{lemma}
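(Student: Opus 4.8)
Here is how I would go about proving Lemma~\ref{LEM:LOLOWERBOUND}.

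\medskip

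The plan is to write the run time as a sum of geometric waiting times, one per possible \leadingones value, and then combine a Cauchy--Schwarz estimate with the classical observation that the bits to the right of the leading-ones block stay uniformly distributed. Concretely, I would first set $P_i := \prod_{j=1}^{i}(1-p_j)$ (so $P_0=1$) and $s_i := p_{i+1}P_i$ for $0\le i\le n-1$. Whenever the current search point $x$ satisfies $\leadingones_n(x)=i$, a single iteration of the \OEAp raises the fitness if and only if bits $1,\dots,i$ are not flipped and bit $i+1$ is flipped, an event of probability exactly $s_i$ that is independent across iterations. Hence, letting $W_i$ be the number of iterations in which the current search point has fitness $i$, and $v_i$ the probability that fitness $i$ is ever the value of the current search point, $W_i$ conditioned on level $i$ being reached is geometric with parameter $s_i$, so $\E[W_i]=v_i/s_i$ (if $s_i=0$ while $v_i>0$ the run time is infinite and there is nothing to prove). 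Since the run time is $T=\sum_{i=0}^{n-1}W_i$ and, as $s_i=P_i-P_{i+1}$, the sum $\sum_{i=0}^{n-1}s_i=1-P_n$ telescopes to at most $1$, it remains to show $\sum_{i=0}^{n-1}v_i/s_i=\Omega(n^2)$ given only $\sum_i s_i\le 1$ and $v_i\in[0,1]$; for this I need a lower bound on $\sum_i v_i$.

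The technical core is the standard ``uniform tail'' lemma: conditioned on the trajectory of fitness values up to some iteration and on the current fitness equalling $i$, the bits in positions $i+2,\dots,n$ of the current search point are independent and uniformly distributed. I would prove this by induction over iterations, the point being that the accept/reject decision of an iteration at fitness level $i$ depends only on the mutation bits in positions $1,\dots,i$; consequently positions $\ge i+2$ perform a lazy random walk driven by mutation bits that are independent of all selection decisions, so they remain uniform, while conditioning on an improving step landing at a new level $j$ only pins down positions $i+2,\dots,j+1$. The consequence I need is that when an improving step occurs from level $i$, the number $L$ of ``free riders'' it collects (so the next level is $i+1+L$) satisfies $\Pr[L\ge \ell]\le 2^{-\ell}$ conditionally on any past, hence $\E[L\mid\text{past}]\le 1$. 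Making this conditioning precise is, I expect, the only genuinely delicate step; everything else is bookkeeping.

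With the lemma in hand I would finish as follows. Let $R$ be the number of improving steps and $L_1,\dots,L_R$ their free-rider counts; then $R+\sum_{k=1}^{R}L_k=n-\leadingones_n(x^{(0)})$ always. Since the event $\{R\ge k\}$ only depends on the process up to the $(k-1)$-st improving step, the tail bound yields $\E\big[\sum_{k=1}^{R}L_k\big]\le\sum_{k\ge 1}\Pr[R\ge k]=\E[R]$, and combining this with $\E[\leadingones_n(x^{(0)})]=1-2^{-n}<1$ gives $\E[R]\ge n-1-\E[R]$, i.e.\ $\E[R]\ge (n-1)/2$. On the other hand the set of fitness levels in $\{0,\dots,n-1\}$ ever attained by the current search point has exactly $R$ elements, so $\sum_{i=0}^{n-1}v_i=\E[R]\ge (n-1)/2$. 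Finally, Cauchy--Schwarz applied to the vectors $\big(\sqrt{v_i/s_i}\,\big)_i$ and $\big(\sqrt{s_i}\,\big)_i$ gives $\big(\sum_i\sqrt{v_i}\,\big)^2\le\big(\sum_i v_i/s_i\big)\big(\sum_i s_i\big)$, so using $\sum_i s_i\le 1$ and $\sqrt{v_i}\ge v_i$ (valid since $v_i\in[0,1]$) I obtain
\[
\E[T]=\sum_{i=0}^{n-1}\frac{v_i}{s_i}\;\ge\;\Big(\sum_{i=0}^{n-1}\sqrt{v_i}\,\Big)^{2}\;\ge\;\Big(\sum_{i=0}^{n-1}v_i\Big)^{2}\;\ge\;\Big(\frac{n-1}{2}\Big)^{2}=\Omega(n^2).
\]
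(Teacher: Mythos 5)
Your proof is correct, but it takes a genuinely different route from the paper's. The paper starts from the closed-form expression $\sum_{i=1}^n \tfrac{1}{2p_i}\prod_{j<i}(1-p_j)^{-1}$ for the expected run time (citing B\"ottcher et al.), reduces via an exchange argument to monotonically increasing $p_i$, and then plays two terms against each other: reaching level $k=\lfloor n/3\rfloor$ costs $\Omega(n/p_k)$, while leaving level $2k$ costs $e^{\Omega(p_k k)}$, and a case distinction on $p_k$ gives $\Omega(n^2)$. You instead keep the level decomposition abstract ($\E[T]=\sum_i v_i/s_i$ with $v_i$ the visit probabilities and $s_i=p_{i+1}\prod_{j\le i}(1-p_j)$), observe the telescoping identity $\sum_i s_i\le 1$, lower-bound $\sum_i v_i\ge (n-1)/2$ via the standard free-rider/uniform-tail argument, and close with Cauchy--Schwarz. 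Your approach buys several things: it needs no sorting of the $p_i$, no case distinction, no appeal to the exact run-time formula (you essentially re-derive the half of it you need, including the origin of the factor $1/2$ as a visit probability), and it yields the explicit constant $(n-1)^2/4$; the identity ``improvement probabilities along a \LeadingOnes-type chain sum to at most one, hence the expected time is at least quadratic in the number of visited levels'' is a clean, reusable principle. The paper's argument is shorter once the B\"ottcher et al.\ formula is taken as given, and its two-term tension makes visible \emph{why} no single mutation rate can be good both early and late. The only step in your write-up that deserves full care is the conditioning in the uniform-tail lemma (that $\{R\ge k\}$ is determined before $L_k$ is revealed, and that conditioning on the landing level $j$ pins down only positions up to $j+1$); you flag this yourself, and it is indeed the standard argument, so I see no gap.
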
 

\begin{proof}
It is easy to see by arguments that are mostly identical to the ones in~\cite[Section 3.3]{BottcherDN10} that the expected run time of the \OEAp on $\leadingones_n$ is 
$$
\sum_{i=1}^n \frac{1}{2p_i}\frac{1}{\prod_{j=1}^{i-1}(1-p_j)}.
$$
Using this bound one can easily show that we can assume without loss of generality that the mutation probabilities are monotonically increasing, i.e., $p_i \leq p_{i+1}$ holds  all $i \in [n]$. Indeed, if for some $k \in [n]$ $p_k > p_{k+1}$ holds, then the expected run time of the \OEAp is larger than that of the \OEAq with $\q=(q_1, \ldots, q_n)$, $q_k=p_{k+1}$, $q_{k+1}=p_k$, and $q_i=p_i$ for $i \notin \{k,k+1\}$.

We now regard the time it takes the \OEAp to produce for the first time a search point of fitness at least $k:=\lfloor n/3 \rfloor$.
Following~\cite{BottcherDN10} this takes in expectation
\begin{equation}
\label{eq:FirstKElementsSum}
\sum_{i=1}^k \frac{1}{2p_i}\frac{1}{\prod_{j=1}^{i-1}(1-p_j)} \geq \sum_{i=1}^k \frac{1}{2p_k} = \Theta(n/p_k).
\end{equation}
fitness evaluations.

Furthermore, we have
$$
\prod_{j=k}^{2k-1}(1-p_j) \leq (1-p_k)^k \leq e^{-p_k k}, 
$$
which shows that the \OEAp spends in expectation 
\begin{equation}
\label{eq:FirstKElementsSum2}
\frac{1}{2p_{2k}}\frac{1}{\prod_{j=1}^{2k-1}(1-p_j)} \geq e^{p_k k}
\end{equation}
iterations on fitness level $2k$.

Equations~\eqref{eq:FirstKElementsSum} and~\eqref{eq:FirstKElementsSum2} prove that the overall expected optimization time of the \OEAp on $\leadingones_n$ is $\Omega(n/p_k + \exp(p_k k/2))$. For all possible choices of $p_k$ this expression is $\Omega(n^2)$ as can be easily seen using a case distinction (for $p_k = O(1/n)$ the first summand is $\Omega(n^2)$, while for $p_k=\omega(1/n)$ the second one is growing at an exponential rate).
\end{proof}
 
Using these lower bounds for fixed solution lengths, Jensen's Inequality and the convexity of $n \mapsto n \log n$ and $n \mapsto n^2$, respectively, we get the following general lower bound.
 
\begin{theorem}
\label{thm:generalLowerBound}
Let $D$ be any distribution on $\N$ with a finite expectation of $m$. 
Then the expected run time of any \OEAp on $\onemax_D$ is $\Omega(m \log m)$ and the expected run time of any \OEAp on $\leadingones_D$ is $\Omega(m^2)$. Both bounds apply also to the setting in which the algorithm designer can choose the mutation probabilities $\p=(p_1, \ldots, p_n)$ \emph{after} the solution length $n~\sim~D$ has been drawn.
\end{theorem}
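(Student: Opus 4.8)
The plan is to derive both bounds from the corresponding fixed-length lower bounds by a single application of Jensen's inequality. For a fixed solution length $n$, let $T(n)$ denote the infimum, over all mutation vectors $\p=(p_1,\dots,p_n)$, of the expected run time of the \OEAp; I use the same symbol for \onemax and \leadingones and treat the two cases in parallel. For \onemax, the discussion preceding Lemma~\ref{LEM:LOLOWERBOUND} shows that the uniform rate $1/n$ minimises this expected run time and already costs $\Theta(n\log n)$, so $T(n)=\Theta(n\log n)$; for \leadingones, Lemma~\ref{LEM:LOLOWERBOUND} gives $T(n)=\Omega(n^2)$. I first observe that it suffices to prove the theorem in the (weaker) model where the designer may fix $\p$ \emph{after} seeing the realised $n\sim D$: there the expected run time is exactly $\E_{n\sim D}[T(n)]$, whereas for any infinite sequence $(p_i)_{i\in\N}$ fixed in advance, the vector $(p_1,\dots,p_n)$ has expected run time at least $T(n)$ on each realised length $n$, so the overall expected run time is at least $\E_{n\sim D}[T(n)]$ as well; hence a lower bound for the ``after'' model transfers verbatim to the standard ``before'' model.

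The next step is to upgrade the asymptotic statements $T(n)=\Omega(n\log n)$ and $T(n)=\Omega(n^2)$ into inequalities $T(n)\ge c\,g(n)$ that hold for \emph{every} $n\in\N$ with one fixed constant $c>0$ and a comparison function $g$ that is convex on $[0,\infty)$. For \leadingones I take $g(n)=n^2$; for \onemax I take $g(n)=\max\{n\ln n,0\}$, which is convex on $[0,\infty)$, vanishes for $n\le 1$, and equals $n\ln n$ for $n\ge 1$. Each $\Omega$-statement yields a constant and a threshold $n_0$ beyond which the desired inequality holds; for the finitely many $n<n_0$ one has $T(n)>0$ (the initial search point is not optimal with positive probability) while $g(n)$ is bounded there, so replacing $c$ by the minimum of the $\Omega$-constant and $\min_{n<n_0}T(n)/g(n)$ — a minimum of finitely many positive numbers — absorbs those cases, and for $n\le 1$ in the \onemax case the inequality is trivial since $g(n)=0$.

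Finally, in the ``after'' model the expected run time equals $\E_{n\sim D}[T(n)]\ge c\,\E_{n\sim D}[g(n)]$. Since $D$ is supported on the positive integers, its mean satisfies $m\ge 1$, and since $g$ is convex, Jensen's inequality gives $\E_{n\sim D}[g(n)]\ge g(m)$, which equals $m\ln m$ (i.e. $\Theta(m\log m)$, independently of the logarithm base) for \onemax and equals $m^2$ for \leadingones. Thus $\E_{n\sim D}[T(n)]=\Omega(m\log m)$ for $\onemax_D$ and $\Omega(m^2)$ for $\leadingones_D$, and by the reduction of the first paragraph the same bounds hold in the standard model. The only genuinely delicate point is the second step: turning the per-length asymptotics into a single uniform bound against a globally convex function, and in particular controlling the behaviour near $n=0$ so that Jensen applies without a boundary loss; once $g$ is chosen convex on all of $[0,\infty)$ and dominated by $T$ everywhere, the remainder is bookkeeping and a one-line invocation of Jensen.
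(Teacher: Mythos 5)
Your proposal is correct and follows essentially the same route as the paper, which proves this theorem in one sentence by combining the fixed-length lower bounds ($\Omega(n\log n)$ for \onemax from~\cite{Sudholt13,Witt13j} and Lemma~\ref{LEM:LOLOWERBOUND} for \leadingones) with Jensen's inequality and the convexity of $n \mapsto n\log n$ and $n\mapsto n^2$. The additional care you take --- reducing the ``before'' model to the ``after'' model, replacing $n\log n$ by the globally convex $\max\{n\ln n,0\}$, and absorbing the finitely many small $n$ into the constant --- is exactly the bookkeeping the paper leaves implicit, and it is done correctly.
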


Using Equation~\eqref{eq:expectedValue}, we get the following corollary.

\begin{corollary}
\label{cor:lowerBoundOnTrunkated}
Let $N \in \N$ and $q \geq 1/N$. 
Let $D= \TrunkGeo(N,q)$ or $D=\Geo(q)$. The expected run time of any \OEAp on $\onemax_D$ is $\Omega(q^{-1} \log q^{-1})$ and the expected run time of any \OEAp on $\leadingones_D$ is $\Omega(q^{-2})$. Both bounds apply also to the setting in which the algorithm designer can choose the mutation probabilities $\p=(p_1, \ldots, p_n)$ \emph{after} the solution length $n~\sim~D$ has been drawn.
\end{corollary}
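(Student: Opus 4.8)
The plan is to obtain Corollary~\ref{cor:lowerBoundOnTrunkated} as an immediate instantiation of Theorem~\ref{thm:generalLowerBound}, feeding in the expectation estimates from Equation~\eqref{eq:expectedValue}. So the work reduces to (i) checking the hypothesis of the theorem is met and (ii) translating the bounds $\Omega(m \log m)$ and $\Omega(m^2)$, with $m = \E[D]$, into the claimed $q$-dependent form.

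First I would note that both distributions under consideration have finite expectation: $\Geo(q)$ has expectation $q^{-1} < \infty$ for every $q \in (0,1]$, and $\TrunkGeo(N,q)$ is supported on the finite set $[N]$, hence trivially has finite expectation. Thus Theorem~\ref{thm:generalLowerBound} applies in both cases, and for $D = \Geo(q)$ with $m = q^{-1}$ it directly yields the lower bounds $\Omega(q^{-1}\log q^{-1})$ for $\onemax_D$ and $\Omega(q^{-2})$ for $\leadingones_D$. The clause allowing the algorithm designer to fix $\p=(p_1,\dots,p_n)$ after seeing $n \sim D$ carries over verbatim, since Theorem~\ref{thm:generalLowerBound} already states it.

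For $D = \TrunkGeo(N,q)$ the only extra step is to replace $m$ by the estimate $\E[Y] = \Theta(q^{-1})$ from Equation~\eqref{eq:expectedValue}. The $\leadingones$ bound is unproblematic: $m^2 = \Theta(q^{-2})$. For $\onemax$ one has to observe that an asymptotically tight estimate of $m$ propagates through the map $m \mapsto m \log m$; concretely, writing $c_1 q^{-1} \le m \le c_2 q^{-1}$ for constants $c_1,c_2>0$ gives $\log m = \log q^{-1} + \Theta(1)$, so $\log m = \Theta(\log q^{-1})$ and hence $m \log m = \Theta(q^{-1}\log q^{-1})$. This last manipulation is the only place requiring a word of care, and it is where I would spend attention: it is legitimate precisely in the asymptotic regime $q \to 0$ (equivalently $N \to \infty$), which is the regime in which the corollary is meant, so that $\log q^{-1} \to \infty$ and the additive $\Theta(1)$ is negligible.

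In short, there is no real obstacle here; the corollary is a corollary. If anything, the one thing to double-check is consistency of the ranges of $q$: Theorem~\ref{thm:generalLowerBound} and Equation~\eqref{eq:expectedValue} are stated for $q \ge 1/N$, which matches the hypothesis $q \geq 1/N$ of the corollary, and for $D = \Geo(q)$ (the $N = \infty$ case) the constraint is vacuous, so nothing further is needed.
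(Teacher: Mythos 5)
Your proposal is correct and is exactly the paper's route: the paper derives the corollary in one line from Theorem~\ref{thm:generalLowerBound} together with Equation~\eqref{eq:expectedValue}, and your only added content --- that $m=\Theta(q^{-1})$ propagates through $m\mapsto m\log m$ to give $\Theta(q^{-1}\log q^{-1})$ in the regime $q\to 0$ --- is the right (and only) detail worth spelling out.
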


\subsection{Known Upper Bounds}\label{sec:knownResults}

Cathabard et al.~\cite{PKLFoga11} analyze the run time of the \OEAp with uniform mutation probabilities $p_1=\ldots = p_N = 1/N$ and of the \OEAi with $p_i=1/(i+1)$, $1 \leq i \leq N$.

For \onemax they obtain the following results.  

\begin{theorem}[Results for \onemax from~\cite{PKLFoga11}]
\label{thm:PKLOM}
Let $N \in \N$, $\eps \in (0,1)$, and $q=N^{-\eps}$.
For $D = \TrunkGeo(N,q)$ the expected run time of the 
\OEAp with $\p=(1/N, \ldots, 1/N)$ on 
$\onemax_{D}$ is $\Theta(N \log q^{-1})$, while the expected run time of the \OEAi on $\onemax_{D}$ is $O(q^{-2} \log N)$.
\end{theorem}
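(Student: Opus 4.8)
The plan is to handle both bit-flip vectors the same way: first prove a run-time bound for a \emph{fixed} solution length $n\le N$, and then average over $n\sim\TrunkGeo(N,q)$, using the concentration of the (truncated) geometric distribution together with the fact that truncation can only help. Throughout I would exploit that on $\onemax$ the number of ones never decreases along accepted steps, so the fitness-level method applies to the process ``number of remaining zeros''.

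For the uniform variant $p_i=1/N$, I would first show that the expected run time on $\onemax_n$ is $\Theta(N\log n)$ (to be read as $\Theta(N)$ for bounded $n$). For the upper bound, from a point with $k$ zeros a strict improvement happens at least as often as flipping exactly one of these $k$ bits, which has probability $\frac{k}{N}(1-\frac1N)^{n-1}\ge \frac{k}{eN}$ since $n\le N$; summing $eN/k$ over $k=1,\dots,n$ gives $O(N\log n)$. For the matching lower bound I would use a coupon-collector argument: with constant probability at least $n/4$ bits start at $0$, and a fixed bit is touched by mutation only with probability $1/N$ per iteration, independently across iterations and across bits, so after $\frac N2\ln(n/4)$ iterations the probability that all of them have been touched is at most $(1-(n/4)^{-1/2})^{n/4}\to 0$, whence the run time is $\Omega(N\log n)$. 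Then I would average over $n\sim\TrunkGeo(N,q)$: by Jensen and $\E[n]=\Theta(q^{-1})$ from~\eqref{eq:expectedValue}, $\E[\log n]\le\log\E[n]=O(\log q^{-1})$, giving the upper bound $O(N\log q^{-1})$; for the lower bound, $\Pr[n\ge\frac{1}{2q}]=(1-q)^{1/(2q)-1}=\Omega(1)$ — here I use $\frac{1}{2q}<N$, which holds because $q=N^{-\eps}$ with $\eps<1$ — and on that event $\log n=\Omega(\log q^{-1})$, so the expected run time is $\Omega(N\log q^{-1})$.

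For the variant $p_i=1/(i+1)$ only an upper bound is needed, and the key algebraic observation is the telescoping identity $\prod_{j=1}^n(1-p_j)=\prod_{j=1}^n\frac{j}{j+1}=\frac{1}{n+1}$. It follows that flipping exactly bit $i$ and no other bit has probability $\frac{p_i}{1-p_i}\cdot\frac{1}{n+1}=\frac{1}{i(n+1)}$, so from a point whose zeros sit at positions $S$ (with $|S|=k$) a strict improvement occurs with probability at least $\sum_{i\in S}\frac{1}{i(n+1)}\ge\frac{k}{n(n+1)}$, using $i\le n$ for every $i\in S$. The fitness-level method then bounds the expected run time on $\onemax_n$ by $\sum_{k=1}^n\frac{n(n+1)}{k}=n(n+1)H_n=O(n^2\log n)$. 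Averaging over $n\sim\TrunkGeo(N,q)$ and using $n\le N$ (hence $H_n\le H_N=O(\log N)$) together with $\E[n^2]=O(q^{-2})$ — which holds because moving the upper tail of $\Geo(q)$ onto the point $N$ cannot increase the second moment — yields $O(q^{-2}\log N)$.

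I expect the main obstacle to be the $\Omega(N\log n)$ lower bound for fixed $n$ in the uniform case: the upper bounds are essentially one-line fitness-level computations and the moment estimates for $\TrunkGeo(N,q)$ are routine, but the coupon-collector lower bound needs care — the concentration of the initial number of zeros, the independence of the per-bit mutation decisions, and the (harmless but worth noting) fact that on $\onemax$ a bit set to $1$ may in principle flip back, so ``having been touched by mutation'' is only a necessary condition for an initially-zero bit to have become a one. If a cleaner route is preferred, one can instead invoke the known $\Omega(n\log n)$-type lower bounds for the \oea on $\onemax$ and adapt them from mutation rate $1/n$ to $1/N$.
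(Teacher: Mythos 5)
Your proof is correct, but note that the paper itself contains no proof of this statement: Theorem~\ref{thm:PKLOM} is imported verbatim from Cathabard et al.~\cite{PKLFoga11}, so there is nothing in-paper to compare against. Judged on its own, your argument is a sound and self-contained derivation along the standard lines one would expect (and which~\cite{PKLFoga11} essentially follows): fitness levels for the upper bounds, a coupon-collector argument for the $\Omega(N\log n)$ lower bound at fixed $n$, and then averaging over $n\sim\TrunkGeo(N,q)$ via Jensen, the tail bound $\Pr[n\ge m]=(1-q)^{m-1}$, and the stochastic domination of $\TrunkGeo(N,q)$ by $\Geo(q)$ for the second moment. The telescoping identity $\prod_{j=1}^n(1-1/(j+1))=1/(n+1)$ is indeed the crux of the $\OEAi$ bound, and it is consistent with the single-bit-flip probabilities the paper quotes in Section~\ref{sec:ArbitraryLengthNonUniformBitFlipProb}. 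Two minor points. First, in the coupon-collector step the exponent is slightly off: $(1-1/N)^{N/2}<e^{-1/2}$, so with $T=\tfrac N2\ln(n/4)$ the per-bit ``untouched'' probability is only guaranteed to be at least $(n/4)^{-N/(2(N-1))}$, not $(n/4)^{-1/2}$; this changes nothing asymptotically (take $T=\tfrac N3\ln(n/4)$ if you want the clean exponent), and your caveat that ``never touched'' is the right necessary condition is exactly the point that makes the argument rigorous despite possible back-flips. Second, your lossy step $H_n\le H_N$ in the $\OEAi$ bound is precisely where the paper's subsequent Remark says the result can be sharpened to $O(q^{-2}\log q^{-1})$: splitting off the event $n=\omega(q^{-1}\log q^{-1})$, whose probability is superpolynomially small, lets you replace $H_n$ by $O(\log q^{-1})$ on the dominant event, so your computation already contains the improvement the paper alludes to.
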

This result shows that the \OEAp with $\p=(1/N, \ldots, 1/N)$ outperforms the \OEAi for $q < 1/\sqrt{N}$, while the latter algorithm is preferable for larger $q$. As we shall see in the following section one should not conclude from this result that non-uniform bit flip probabilities are the better choice for this problem.

\textbf{Remark:} By using a slightly more careful analysis than presented in~\cite{PKLFoga11}, the bound for the \OEAi on $\onemax_{D}$ can be improved to $O(q^{-2} \log q^{-1})$. In fact, an analysis similar to the one in Section~\ref{sec:PKLour}, that is disregarding outcomes that are much larger than the expectation, will give that result.
It can also be shown that the requirement $q=N^{-\eps}$ is not needed as the $O(q^{-2} \log q^{-1})$ holds for all $q>1/N$. 
It also holds for the (non-truncated) geometric distribution $D=\Geo(q)$.

For \leadingones, Cathabard et al. show the following results.
\begin{theorem}[Results for \leadingones from~\cite{PKLFoga11}]
\label{thm:PKLLO}
For $N$, $\eps$, $q$, and $D$ as in Theorem~\ref{thm:PKLOM}, the expected run time of the \OEAp with $\p=(1/N, \ldots, 1/N)$ on $\leadingones_{D}$ is 
$\Theta(N q^{-1})$, while the expected run time of the \OEAi on $\leadingones_{D}$ is 
$\Theta(q^{-3})$.
\end{theorem}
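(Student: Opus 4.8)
The plan is to reduce both statements to the classical closed form for the expected optimization time of the \OEAp on \leadingones. As derived within the proof of Lemma~\ref{LEM:LOLOWERBOUND} (following~\cite[Section~3.3]{BottcherDN10}), for a fixed solution length $n$ the \OEAp with $\p=(p_1,\dots,p_n)$ uses in expectation
\[
T(n)=\sum_{i=1}^{n}\frac{1}{2p_i}\,\frac{1}{\prod_{j=1}^{i-1}(1-p_j)}
\]
fitness evaluations. Since this is an \emph{exact} expression, it suffices to evaluate it for the two concrete mutation vectors and then average over $n\sim D=\TrunkGeo(N,q)$ by linearity of expectation; no separate lower-bound argument is needed, as the matching $\Theta$ drops out of the computation.

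\textbf{Uniform rates $p_i=1/N$.} Here $T(n)=\frac{N}{2}\sum_{i=1}^{n}(1-1/N)^{-(i-1)}$ is a geometric sum, equal to $\frac{N}{2}(N-1)\bigl((1-1/N)^{-n}-1\bigr)$. For $N$ large and on the whole support $1\le n\le N$ of $\TrunkGeo(N,q)$ one has $-\ln(1-1/N)\in[1/N,2/N]$, so the exponent $-n\ln(1-1/N)$ lies in $[0,2]$ and hence $(1-1/N)^{-n}-1=\Theta(n/N)$ with absolute implied constants. Consequently $T(n)=\Theta(Nn)$ uniformly over the support, and averaging together with $\E[\TrunkGeo(N,q)]=\Theta(q^{-1})$ from Equation~\eqref{eq:expectedValue} gives $\E_{n\sim D}[T(n)]=\Theta\bigl(N\,\E[n]\bigr)=\Theta(Nq^{-1})$.

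\textbf{Non-uniform rates $p_i=1/(i+1)$.} Now the product telescopes, $\prod_{j=1}^{i-1}\bigl(1-\tfrac{1}{j+1}\bigr)=\prod_{j=1}^{i-1}\tfrac{j}{j+1}=\tfrac1i$, so $T(n)=\sum_{i=1}^{n}\tfrac{i+1}{2}\cdot i=\tfrac12\sum_{i=1}^{n}i(i+1)=\Theta(n^3)$, and it remains to evaluate $\E_{n\sim D}[n^3]$. This last step is the only point that requires real care, and it is where the hypothesis $q=N^{-\eps}$, $\eps\in(0,1)$, enters. For the upper bound, observe that $\TrunkGeo(N,q)$ is obtained from $\Geo(q)$ by moving all probability mass of $\{n\ge N\}$ down onto the single point $N$; since $n\mapsto n^3$ is non-decreasing this can only decrease the third moment, so $\E_{\TrunkGeo(N,q)}[n^3]\le\E_{\Geo(q)}[n^3]=\Theta(q^{-3})$ (the latter being the standard third moment of a geometric). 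For the matching lower bound one uses $q^{-1}=N^{\eps}\ll N$: on the range $[1,\lfloor q^{-1}\rfloor]\subseteq[1,N-1]$ (for $N$ large) the distributions $\TrunkGeo(N,q)$ and $\Geo(q)$ coincide, so restricting the sum to that range already yields $\sum_{n=1}^{\lfloor q^{-1}\rfloor}n^3 q(1-q)^{n-1}=\Omega(q^{-3})$, because the weights $(1-q)^{n-1}$ remain $\Theta(1)$ there. Hence $\E_{n\sim D}[n^3]=\Theta(q^{-3})$ and $\E_{n\sim D}[T(n)]=\Theta(q^{-3})$ for the \OEAi. I expect the main obstacle to be exactly this verification that truncation at $N$ does not change the order of the third moment; everything else is a direct evaluation of the closed-form $T(n)$.
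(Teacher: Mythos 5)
Your proof is correct. There is nothing in the paper to compare it against directly: Theorem~\ref{thm:PKLLO} is quoted from Cathabard et al.~\cite{PKLFoga11} and the paper gives no proof of it. Your self-contained derivation is, however, entirely in the spirit of the paper's own toolbox: the exact expression $T(n)=\sum_{i=1}^{n}\frac{1}{2p_i}\prod_{j=1}^{i-1}(1-p_j)^{-1}$ is precisely what the paper invokes in the proof of Lemma~\ref{LEM:LOLOWERBOUND} (and what specializes to Lemma~\ref{lem:oeaLO} for uniform rates), and your uniform-rate computation is structurally identical to the paper's own verification that $\p=(q,\dots,q)$ yields $\Theta(N/q)$ on $\leadingones_{\TrunkGeo(N,q)}$ in the paragraph following Theorem~\ref{thm:OurRandomLengthLeadingOnes}, just with $1/N$ in place of $q$. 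The two places where care is needed are both handled correctly: (i) $T(n)=\Theta(Nn)$ uniformly over the support because the exponent $-n\ln(1-1/N)$ stays in $[0,2]$ for $n\le N$, after which $\E[T]=\Theta(N\,\E[n])=\Theta(Nq^{-1})$ by~\eqref{eq:expectedValue}; and (ii) for the \OEAi the telescoping product gives $T(n)=\Theta(n^3)$, truncation can only decrease the third moment (giving $O(q^{-3})$ from the geometric third moment), and the matching $\Omega(q^{-3})$ follows from the range $n\le\lfloor q^{-1}\rfloor\le N-1$, which is exactly where the hypothesis $q=N^{-\eps}$ with $\eps<1$ is used. Your argument in fact establishes the slightly more general statement, consistent with the paper's remark after the theorem, that the $\Theta(q^{-3})$ bound only requires $q^{-1}$ to be sufficiently below $N$ rather than $q=N^{-\eps}$.
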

Thus also for $\leadingones$ the \OEAi performs better than the \OEAp with $\p=(1/N, \ldots, 1/N)$ when $q>1/\sqrt{N}$ while the uniform \OEAp should be preferred for smaller~$q$. 
 
\textbf{Remark:} As in the $\onemax$ case the $\Theta(q^{-3})$ bound for the \OEAi holds more generally for all geometric distributions $\Geo(q)$ with parameter $q>1/N$. 

From Theorems~\ref{thm:PKLOM} and~\ref{thm:PKLLO} we can see that for both $\onemax_D$ and $\leadingones_D$ the \OEAi looses a factor of $1/q$ with respect to the lower bound given by Corollary~\ref{cor:lowerBoundOnTrunkated}. This will be improved in the following section.

\subsection{Optimal Upper Bounds With Uniform Mutation Probabilities}
\label{sec:PKLour}

We show that for $D$ being the (truncated or non-truncated) geometric distribution there exist bit flip probabilities $\p=(p_1, \ldots, p_N)$ and $\p=(p_i)_{i \in \N}$, respectively, such that the expected run time of the \OEAp on $\onemax_D$ and $\leadingones_D$ is significantly lower than those of the two algorithms studied by Cathabard et al. 
The expected run times of our algorithm match the lower bounds given in Corollary~\ref{cor:lowerBoundOnTrunkated} and are thus optimal in asymptotic terms.

In both cases, i.e., both for $\onemax_D$ and for $\leadingones_D$, the mutation rates yielding the improvement over the results in~\cite{PKLFoga11} are uniform. Our results therefore imply that for these two problems, unlike conjectured in~\cite{PKLFoga11}, one cannot gain more than constant factors from using non-uniform mutation probabilities.  

The key observation determining our choice of the mutation probability is the fact that the (truncated) geometric distribution is highly concentrated. Hence, if we know the parameters of the distribution, we can choose the mutation probability such that it is (almost) reciprocal in each position to the expected length of the solution. Thus, in the setting of~\cite{PKLFoga11}, i.e., for the truncated geometric distribution with parameters $N$ and $q$, we set $p_i:=q/2$ for all $i \in [N]$ (recall equation~\eqref{eq:expectedValue}).
Our approach naturally also works for the (non-truncated) geometric distribution $\Geo(q)$, which is also highly concentrated around its mean $1/q$.

We remark without proof that similar results hold for other distributions that are highly concentrated around the mean, e.g., binomial distributions, and also highly concentrated unbounded distributions, such as Poisson distributions.

\begin{theorem}\label{thm:OurRandomLengthOneMax}
For $N \in \N$ let $1/N \leq q=q(N)<1/2$. 
For $D= \Geo(q)$ and $D= \TrunkGeo(N,q)$ the expected run time of the \OEAp with $\p=(q/2, \ldots, q/2)$ on $\OneMax_{D}$ is $\Theta(q^{-1} \log q^{-1})$.
\end{theorem}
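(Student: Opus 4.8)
The plan is to establish the upper bound $O(q^{-1}\log q^{-1})$; the matching lower bound $\Omega(q^{-1}\log q^{-1})$ is already provided by Corollary~\ref{cor:lowerBoundOnTrunkated}. Since $q<1/2$, the uniform mutation probability $p:=q/2$ lies in $(0,1/4)$ and is thus well-defined. I would first bound the run time conditional on a fixed solution length $n$: a standard fitness-level argument on $\OM_n$ shows that from a search point with $n-k$ zero bits, flipping exactly one of them (and no other bit) produces a strictly better offspring with probability at least $(n-k)\,p\,(1-p)^{n-1}$, and such an offspring is always accepted. Summing the waiting times over the $n$ fitness levels and using $H_n\le \ln n+1$ yields
\begin{align*}
\E[T\mid n]\ \le\ \frac{H_n}{p\,(1-p)^{n-1}}\ \le\ \frac{2(\ln n+1)}{q\,(1-q/2)^{\,n-1}}\ =:\ \bar T(n).
\end{align*}

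Next I would take the expectation over $n\sim D$. The function $\bar T$ is nondecreasing in $n$ (numerator up, denominator down), and $\TrunkGeo(N,q)$ is stochastically dominated by $\Geo(q)$, since truncation only moves probability mass to smaller values; hence $\E_{n\sim\TrunkGeo(N,q)}[\bar T(n)]\le\E_{n\sim\Geo(q)}[\bar T(n)]$ and it suffices to treat $D=\Geo(q)$. Using $\Pr[\Geo(q)=n]=q(1-q)^{n-1}$, the factors of $q$ cancel against $\bar T$ and the geometric weight combines with its denominator, giving
\begin{align*}
\E_{n\sim\Geo(q)}[\bar T(n)]\ =\ 2\sum_{n\ge1}(\ln n+1)\,\rho^{\,n-1},\qquad \rho:=\frac{1-q}{1-q/2}.
\end{align*}
The decisive observation is that $1-\rho=\frac{q/2}{1-q/2}\ge q/2$, i.e.\ $\rho\le 1-q/2$ is bounded away from $1$ by $\Omega(q)$. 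This is exactly why the constant $1/2$ in $p=q/2$ is needed: for $p=q$ one would get $\rho=1$ and a divergent sum, consistent with the $\Theta(N\log N)$ entry for $p_i=q$ in Table~\ref{tab:PKLsetting}.

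It then remains to show $\sum_{n\ge1}(\ln n+1)\rho^{\,n-1}=O(q^{-1}\log q^{-1})$, which I would do by splitting the sum at $m:=\lceil (2/q)\ln(1/q)\rceil$. For $n\le m$ we have $\ln n+1=O(\log q^{-1})$, and $\sum_{n\le m}\rho^{\,n-1}\le 1/(1-\rho)\le 2/q$, so this part contributes $O(q^{-1}\log q^{-1})$. For $n>m$, bounding $\ln n+1\le n$ and $\rho^{\,n-1}\le e^{-q(n-1)/2}$ and summing the resulting (essentially geometric) tail gives a contribution of order $q^{-1}\,m\,e^{-qm/2}$; since $e^{-qm/2}=\Theta(q)$ and $m=\Theta(q^{-1}\log q^{-1})$, this is again $O(q^{-1}\log q^{-1})$ (and can be made $o(q^{-1})$ by enlarging $m$ by a constant factor). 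Adding the two parts proves $\E[T]=O(q^{-1}\log q^{-1})$ for both $\Geo(q)$ and $\TrunkGeo(N,q)$.

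The step I expect to be the main obstacle is this last tail estimate: one must check that the run time for atypically long solution lengths, which grows exponentially in $n$, is compensated by the exponentially small probability of such lengths, and --- more delicately --- that after this compensation the surviving sum $\sum(\ln n)\rho^{\,n}$ is only $O(q^{-1}\log q^{-1})$ and not $\Theta(q^{-2})$. Everything hinges on $\rho$ being at distance $\Theta(q)$ from $1$, which the choice $p=q/2$ guarantees.
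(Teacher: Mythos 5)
Your proof is correct and follows essentially the same route as the paper's: a per-length bound of the form $(\ln n+1)/\bigl(p(1-p)^{n-1}\bigr)$ (the paper cites this as Lemma~\ref{lem:oeaOM} rather than rederiving it via fitness levels), followed by averaging over the geometric length distribution and observing that the resulting series $\sum_n(\ln n+1)\rho^{n-1}$ has ratio $\rho=1-\Theta(q)$, so it sums to $O(q^{-1}\log q^{-1})$, with the lower bound supplied by Corollary~\ref{cor:lowerBoundOnTrunkated}. Two of your technical steps are in fact cleaner than the paper's: reducing $\TrunkGeo(N,q)$ to $\Geo(q)$ via stochastic dominance and monotonicity of the bound avoids the paper's separate case analysis for the atom at $n=N$, and your single split of the series at $m=\Theta(q^{-1}\log q^{-1})$ replaces the paper's decomposition into blocks of length $\lceil 1/q\rceil$; both are variations within the same overall strategy and all estimates check out.
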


For the proof we will use the following upper bound for the expected run time of the \oea on \OneMax. A similar upper bound can be found in~\cite[Theorem 4.1]{Witt13j}.
\begin{lemma}[{\cite[Theorem 8]{Sudholt13}}]
\label{lem:oeaOM}
For a fixed length $n$ and a uniform mutation vector $\p=(p, \ldots, p)$ with $0<p<1$, the expected run time of the \OEAp on $\OneMax_n$ is at most $(\ln(n) + 1)/(p(1-p)^{n})$. 
\end{lemma}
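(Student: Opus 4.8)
The plan is to prove the bound via the classical method of fitness-based partitions (the fitness level method), which is tailor-made for upper bounds of this form. I would partition the search space $\{0,1\}^n$ according to \OneMax value, writing $A_k$ for the set of search points with exactly $k$ ones, $k \in [0..n]$. Since \OneMax is strictly increasing in the number of ones and the \OEAp accepts any offspring of at least equal fitness, the current individual can only move to levels of equal or higher \OneMax value; in particular, once it reaches $A_n$ (the optimum) it stays there. The fitness level theorem then yields $\E[T] \le \sum_{k=0}^{n-1} 1/s_k$, where $s_k$ is any lower bound on the probability that, starting from a point in $A_k$, one mutation-and-selection step produces and accepts a point of strictly larger \OneMax value.

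The key quantitative step is to lower-bound $s_k$. From a point with $k$ ones there are $n-k \ge 1$ zero-bits, and the simplest strictly improving move flips exactly one of these zeros to a one while leaving all $n-1$ remaining bits untouched. For one fixed zero-bit this has probability $p\,(1-p)^{n-1}$, and the $n-k$ such events are pairwise disjoint, so
\[
s_k \ \ge\ (n-k)\,p\,(1-p)^{n-1}.
\]
Every such offspring has \OneMax value exactly $k+1 > k$ and is therefore accepted, so this is a genuine lower bound on the improvement probability; we are free to ignore all other improving moves, such as those flipping several bits simultaneously.

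Summing the reciprocals, I would pull out the common factor $p(1-p)^{n-1}$ and recognize the remaining sum as a harmonic number:
\[
\E[T] \ \le\ \sum_{k=0}^{n-1} \frac{1}{(n-k)\,p\,(1-p)^{n-1}} \ =\ \frac{1}{p\,(1-p)^{n-1}} \sum_{j=1}^{n} \frac{1}{j} \ \le\ \frac{\ln(n)+1}{p\,(1-p)^{n-1}},
\]
using $\sum_{j=1}^n 1/j \le \ln(n)+1$. Finally, since $0<p<1$ gives $(1-p)^{n-1} \ge (1-p)^{n}$ and hence $1/(1-p)^{n-1} \le 1/(1-p)^{n}$, the claimed bound $(\ln(n)+1)/(p(1-p)^{n})$ follows (indeed the argument even yields the slightly sharper exponent $n-1$). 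There is no real obstacle here: the only points requiring care are the applicability of the fitness level theorem, which rests on the monotone acceptance behaviour noted above, and the disjointness of the single-zero-flip events used to bound $s_k$; both are immediate for \OneMax.
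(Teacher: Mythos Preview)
Your argument is correct: the fitness level method with the single-bit-flip lower bound $s_k \ge (n-k)\,p\,(1-p)^{n-1}$ yields exactly $H_n/(p(1-p)^{n-1}) \le (\ln n + 1)/(p(1-p)^{n-1})$, and weakening the exponent from $n-1$ to $n$ gives the stated bound. Note, however, that the paper does not supply its own proof of this lemma at all---it is merely quoted from \cite[Theorem~8]{Sudholt13}---so there is nothing to compare against; your proof is the standard derivation of that result.
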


\begin{proof}[Proof of Theorem~\ref{thm:OurRandomLengthOneMax}]
We first consider $D = \TrunkGeo(N,q)$. We do not worry about constant factors in this analysis and thus bound some expressions generously.

Using Lemma~\ref{lem:oeaOM} we can bound the expected run time of the \OEAp on $\OneMax_{D}$ from above by
\begin{align}
\label{eq:PKLour1}
& \sum_{n=1}^{N-1}{\frac{q(1-q)^{n-1}(\ln(n)+1)}{q/2 (1-q/2)^{n}}} +  \frac{(1-q)^{N-1}(\ln(N)+1)}{q/2 (1-q/2)^{N}}.
\end{align}
To bound the last summand in this expression, we first observe that, for all positive $n$,
\begin{align} 
\label{eq:nett}
(1-\tfrac{q}{2})^{n}
 = 
(1-q+\tfrac{q^2}{4})^{n/2}
>
(1-q)^{n/2}.
\end{align}
This shows that the last summand in~\eqref{eq:PKLour1} is at most
\begin{align*}
2 (1-q)^{N/2-1} (\ln(N)+1)/q,
\end{align*}
which is $O(q^{-1} \log q^{-1})$. This can be seen as follows. 
For $q \geq 2 \ln\ln(N)/N$ it holds (using the inequality $1-q \leq \exp(-q)$) that 
$(1-q)^{N/2-1} \leq \exp(-qN/2) \leq 1/\ln(N)$ and thus 
$2 (1-q)^{N/2-1} (\ln(N)+1)/q = O(1/q)$, while for 
$1/N \leq q \leq 2 \ln\ln(N)/N$ we have (for some suitably chosen constant $C$)
$(1-q)^{N/2} \ln(N) 
\leq 
(1-1/N)^{N/2} \ln(N)
\leq 
C (\ln(N)-\ln(2\ln\ln N))
= 
C \ln(N/(2\ln\ln N))
\leq
C \ln(1/q).$
 
Using again~\eqref{eq:nett} we bound the first part of the sum~\eqref{eq:PKLour1} by
\begin{align*}
& \frac{2}{1-q} \sum_{n=1}^{N-1}{\frac{(1-q)^n (\ln(n)+1)}{(1-q/2)^n}}\\
& \leq 
\frac{2}{1-q} \sum_{n=1}^{N-1}{(\ln(n)+1)(1-q)^{n/2}}\\
& =
2  \sum_{n=1}^{N-1}{(\ln(n)+1)(1-q)^{n/2-1}}.
\end{align*}
To show that this expression is $O(q^{-1} \log q^{-1})$ we split the sum into blocks of length $k:=\lceil 1/q \rceil$ and use again the inequality $1-q \leq \exp(-q)$. This shows that the last expression is at most 
\begin{align*}
& 2 \sum_{j=0}^{\lceil N/k \rceil-1}{ \sum_{\ell=1}^{k}{\exp(-q(\tfrac{jk+\ell}{2}-1)) (\ln(jk+\ell)+1)}}\\
& \leq 
2k \sum_{j=0}^{\lceil N/k \rceil-1}{
	\exp(-\tfrac{1}{k}(\tfrac{jk}{2}-1)) (\ln(j+1)+ \ln(k)+1)}\\
& = 
O(k \ln k),
\end{align*}
where the last equality can be best seen by first considering that 
$\sum_{j=0}^{\lceil N/k \rceil-1}{
	\exp(-\tfrac{1}{k}(\tfrac{jk}{2}-1)) (\ln(k)+1)} = \Theta(\log k)$, while $\sum_{j=0}^{\lceil N/k \rceil-1}{
	\exp(-\tfrac{1}{k}(\tfrac{jk}{2}-1)) (\ln(j+1))}=O(1)$.
Summarizing the computations above we see that~\eqref{eq:PKLour1} is of order at most $q^{-1} \log q^{-1}$.

For $D= \Geo(q)$ the computations are almost identical. By Lemma~\ref{lem:oeaOM} and~\eqref{eq:nett} the expected run time of the \OEAp on $\onemax_D$ is at most
\begin{align*}
& \frac{2}{1-q} \sum_{n=1}^{\infty}{\frac{(1-q)^{n}(\ln(n) +1)}{(1-q/2)^{n}}}\\ 
& \leq 
2 \sum_{n=1}^{\infty}{(1-q)^{n/2-1}(\ln(n) +1)} = 
O(q^{-1} \log q^{-1}), 
\end{align*} 
which can be seen in a similar way as above by splitting the sum into blocks of size $k:=\lceil 1/q \rceil$ and using $1-q \leq \exp(-q)$.
\end{proof}

It is interesting to note that the expected run time increases to between $\Omega(N)$ and $O(N \log N)$ when the mutation probability is chosen to be $\p=(q, \ldots, q)$. 
This can easily be seen as follows. For the upper bound we use Lemma~\ref{lem:oeaOM} (ignoring the ``+1'' terms which are easily seen to play an insignificant role) to obtain that the expected run time of the \OEAp with $\p=(q, \ldots, q)$ on $\OneMax_{\TrunkGeo(N,q)}$ is at most
$
 \sum_{n=1}^{N-1}{\frac{q(1-q)^{n-1}\ln n}{q (1-q)^{n}}} 
+ \frac{(1-q)^{N-1} \ln N}{q (1-q)^N}
 = 
\sum_{n=1}^{N-1}{\frac{\ln n}{1-q}} + O(\log(N)/q)
 = 
\frac{\ln((N-1)!)}{1-q} + O(N \log N)
 = 
O(N \log N)$. 

We can derive a strong lower bound of $\Omega(N \log N)$ in the case of $2^{-N/3} \leq q \leq 1/N$ from the following one for static solution lengths.
\begin{lemma}[{\cite[Theorem 9]{Sudholt13}}, {\cite[Theorem 4.1]{Witt13j}}]
\label{lem:oeaOMlower}
For a fixed length $n$ and a uniform mutation vector $\p=(p, \ldots, p)$, the expected run time of the \OEAp on $\OneMax_n$ is at least 
$(\ln(n)-\ln\ln n-3)/(p(1-p)^{n})$ 
for $2^{-n/3} \leq p \leq 1/n$ and 
at least 
$(\ln(1/(p^2n))-\ln\ln n-3)/(p(1-p)^{n})$ 
for $1/n \leq p \leq 1/(\sqrt{n}\log n)$.
\end{lemma}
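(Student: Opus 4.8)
This bound is quoted from \cite{Sudholt13,Witt13j}, so I would not reprove it from scratch; the natural route, and the one those references take, is the fitness–level method for lower bounds of \cite{Sudholt13} together with a sharp estimate of the one-step improvement probability of the \OEAp on \OneMax. The plan is as follows. Partition $\{0,1\}^n$ into the levels $A_z:=\{x:\,n-\ones{x}=z\}$, $z=0,\dots,n$, grouping search points by their number of zeros; since the \OEAp on \OneMax never accepts a fitness loss, the zero-count of the current search point is non-increasing, $A_0$ is the target, and a step leaves level $z$ precisely when it strictly improves. Describing a mutation of a point in $A_z$ by two independent binomials $A\sim\mathrm{Bin}(z,p)$ (zeros flipped) and $B\sim\mathrm{Bin}(n-z,p)$ (ones flipped), the step improves iff $A>B$, so $s_z:=\Pr[A>B]$ is the probability of leaving level $z$, and a step overshoots (drops the zero-count by $\ge 2$) iff $A\ge B+2$.

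The key estimate comes from expanding $\Pr[A\ge B+r]=\sum_{b\ge 0}\Pr[B=b]\,\Pr[A\ge b+r]$ and bounding $\Pr[A\ge b+r]\le (zp)^{b+r}/(b+r)!$. This shows that the $b=0,r=1$ term $zp(1-p)^{n-z}$ dominates $s_z$, and that overshooting and jumps over several levels are smaller than $s_z$ by a factor $O(zp)$, provided $z=O\!\big(1/(np^2)\big)$ (the ratio of the $b=1$ to the $b=0$ contribution is $\Theta(nzp^2)$) and provided $zp$ is small. I would therefore work with a window consisting of the top $z_*:=\Theta\!\big(\min\{n,\,1/(np^2)\}/\log n\big)$ levels, on which simultaneously single-zero flips dominate and $pz_*=o(1)$, so that $s_z=(1+o(1))\,zp(1-p)^{n-z}$ there and the level-skipping parameter in the lower-bound fitness-level theorem is $o(1)$.

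A Chernoff bound on the uniform initialisation gives that the first search point has $\Omega(n)$ zeros with probability $1-o(1)$; combined with the negligibility of level-skipping established above, the fitness-level lower-bound theorem then yields
$$
\E[T]\ \ge\ (1-o(1))\sum_{z=1}^{z_*}\frac{1}{s_z}\ =\ \frac{1-o(1)}{p(1-p)^{n}}\sum_{z=1}^{z_*}\frac{(1-p)^{z}}{z}.
$$
Since $pz_*=o(1)$, the substitution $u=pz$ turns the last sum into an approximation of $\int_{p}^{pz_*}u^{-1}e^{-u}\,du$, an integral over an interval of length $o(1)$ on which $(1-e^{-u})/u\le 1$, so the sum equals $\ln z_*-O(1)$ with leading coefficient exactly $1$. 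With $\ln z_*=\ln\min\{n,\,1/(p^2n)\}-\ln\ln n-O(1)$, and noting that $\min\{n,\,1/(p^2n)\}$ equals $n$ for $p\le 1/n$ and $1/(p^2n)$ for $p\ge 1/n$, this gives $\E[T]\ge(\ln\min\{n,1/(p^2n)\}-\ln\ln n-O(1))/(p(1-p)^n)$; making the $O(1)$ explicit yields the additive $-3$ of the statement.

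The step I expect to be the main obstacle is the simultaneous control, inside the fitness-level lower-bound framework, of the probability of leaving a level and of the probability of jumping over several levels at once: it is exactly the requirement that multi-zero flips be negligible next to single-zero flips that both restricts the usable window to $z=O(1/(np^2))$ — thereby replacing $\ln n$ by $\ln(1/(p^2n))$ for the larger mutation rates — and forces the extra logarithmic shave that produces the $-\ln\ln n$ term. Tracking all these lower-order losses precisely enough that the final bound carries the clean corrections $-\ln\ln n-3$ rather than an unspecified constant is the remaining bookkeeping chore; an alternative to the whole argument, used in \cite{Witt13j}, is to replace the fitness-level decomposition by a carefully chosen potential function and a variable-drift lower bound, which avoids the level-skipping issue at the cost of a more delicate potential.
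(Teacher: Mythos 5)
The paper does not prove Lemma~\ref{lem:oeaOMlower} at all---it is imported verbatim from \cite[Theorem~9]{Sudholt13} and \cite[Theorem~4.1]{Witt13j}---and your sketch correctly reconstructs the route taken in the first of these sources: the lower-bound fitness-level method with levels indexed by the number of zeros, the upper bound $s_z\le(1+o(1))\,zp(1-p)^{n-z}$ on the leaving probability valid for $z=O(1/(np^2))$, control of level-skipping via $\Pr[A\ge B+2]/\Pr[A\ge B+1]=O(zp)$, and the harmonic sum over the top $\Theta(\min\{n,1/(np^2)\}/\log n)$ levels producing the $\ln\min\{n,1/(p^2n)\}-\ln\ln n-O(1)$ term. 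Since you explicitly defer the constant-tracking to the cited references rather than claiming a self-contained proof, there is nothing to object to here.
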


Thus, the expected run time of the \OEAp with $\p=(q, \ldots, q)$ and $2^{-N/3} \leq q \leq 1/N$ on $\OneMax_{\TrunkGeo(N,q)}$ is at least
$\sum_{n=1}^{N-1} q(1-q)^{n-1} \frac{ (\ln(n)-\ln\ln n-3)}{q (1-q)^{n}} 
 \geq 
\sum_{n=1}^{N-1}{ \frac{1}{2} \frac{\ln n}{1-q}}
 = 
\frac{1}{2}\frac{\ln((N-1)!)}{1-q}
 = 
\Omega(N \log N). 
$
Similarly we can get a lower bound of $\Omega(N)$ in case of $1/N \leq q \leq 1/(\sqrt{N}\log N)$ by using the lower bound of $1/(q (1-q)^{n})$ for any fixed solution length $n$.

We now turn our attention to the \LeadingOnes problems, where a similar approach as above yields the following result.

\begin{theorem}\label{thm:OurRandomLengthLeadingOnes}
Let $N \in \N$ and $1/N \leq q \leq 1/2$.
For $D = \TrunkGeo(N,q)$ and $D = \Geo(q)$ the expected run time of the \OEAp with $\p=(q/2, \ldots, q/2)$ on $\leadingones_{D}$ is $\Theta(q^{-2})$. 
\end{theorem}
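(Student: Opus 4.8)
The lower bound $\Omega(q^{-2})$ is immediate: Corollary~\ref{cor:lowerBoundOnTrunkated} applies to \emph{any} \OEAp on $\leadingones_D$, in particular to the one with $\p=(q/2,\ldots,q/2)$. So the plan is to establish the matching upper bound $O(q^{-2})$, and the expectation over the random length $n\sim D$ will be handled by the law of total expectation: the expected run time on $\leadingones_D$ equals $\sum_{n}\Pr[D=n]\cdot T(n)$, where $T(n)$ denotes the expected run time of the \OEAp on the fixed-length function $\leadingones_n$.

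First I would recall, exactly as in the proof of Lemma~\ref{LEM:LOLOWERBOUND} (following~\cite{BottcherDN10}), that for a fixed length $n$ and a uniform mutation vector $\p=(p,\ldots,p)$ one has $T(n)=\sum_{i=1}^n \frac{1}{2p}(1-p)^{-(i-1)}$. Summing this geometric series and substituting $p=q/2$ gives the closed form $T(n)=\frac{2(1-q/2)}{q^2}\bigl((1-q/2)^{-n}-1\bigr)$, hence the convenient bound $T(n)\le \frac{2}{q^2}(1-q/2)^{-n}$. Plugging this into the total-expectation sum, it remains to show that the exponential moment $\E\bigl[(1-q/2)^{-n}\bigr]$ (with $n\sim D$) is $O(1)$, uniformly over $q\in[1/N,1/2]$; this will immediately yield an expected run time of $O(q^{-2})$.

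The heart of the computation is therefore a geometric-series estimate. For $D=\Geo(q)$ I would write, for $n\ge 1$, $q(1-q)^{n-1}(1-q/2)^{-n}=\frac{q}{1-q}\,r^{\,n}$ with $r:=\frac{1-q}{1-q/2}$, and note $1-r=\frac{q/2}{1-q/2}>0$, so that $\sum_{n\ge1}\frac{q}{1-q}r^{\,n}=\frac{q}{1-q}\cdot\frac{r}{1-r}=\frac{q}{1-q}\cdot\frac{1-q}{q/2}=2$; thus $\E[(1-q/2)^{-n}]=2$ and the expected run time is at most $4/q^2$. For $D=\TrunkGeo(N,q)$ the same identity bounds the contribution of the terms $1\le n\le N-1$ by $2$, while the single extra term at $n=N$ is $(1-q)^{N-1}(1-q/2)^{-N}=\frac{r^{N}}{1-q}\le\frac{1}{1-q}\le 2$, using $r<1$ and $q\le 1/2$; hence $\E[(1-q/2)^{-n}]\le 4$ and the expected run time is at most $8/q^2=O(q^{-2})$.

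Since the three ingredients — the exact per-length formula for $\leadingones_n$, the geometric-series evaluation, and the trivial bound on the truncation term — are all elementary, I do not expect a genuine obstacle. The only point that needs a little care is that the geometric ratio $r=\frac{1-q}{1-q/2}$ is smaller than $1$ by only a $\Theta(q)$ amount, so that the prefactor $\frac{q}{1-q}$ must exactly cancel the resulting $\Theta(1/q)$ from the sum; the explicit cancellation above makes this transparent. (If one prefers to avoid exact sums, one can instead use $r\le 1-q/2\le e^{-q/2}$ together with $\sum_{n\ge1}e^{-qn/2}=O(1/q)$, which gives the same $O(1)$ bound on $\E[(1-q/2)^{-n}]$ up to constants.)
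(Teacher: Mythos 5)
Your proof is correct and follows essentially the same route as the paper: both use the exact per-length formula $\frac{1}{2p^2}\bigl((1-p)^{-n+1}-(1-p)\bigr)$ for $\leadingones_n$ (Lemma~\ref{lem:oeaLO}), take the expectation over $n\sim D$, reduce to a geometric series in the ratio $\frac{1-q}{1-q/2}$, and treat the truncation term at $n=N$ separately, with the lower bound coming from Corollary~\ref{cor:lowerBoundOnTrunkated}. The only cosmetic difference is that you sum the geometric series exactly, whereas the paper first bounds the ratio via $(1-q/2)^n>(1-q)^{n/2}$ before summing.
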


We will derive this result from the following lemma, which was independently proven in {\cite[Theorem 3]{BottcherDN10}}, {\cite[Corollary 2]{Sudholt13}}, and in a slightly weaker form in~{\cite[Theorem 1.2]{Ladret05}}.

\begin{lemma}[{\cite{BottcherDN10}}, {\cite{Sudholt13}}, and~{\cite{Ladret05}}]
\label{lem:oeaLO}
For a fixed length $n$ and a mutation vector $\p=(p, \ldots, p)$ with $0<p<1/2$, the expected run time of the \OEAp on $\leadingones_n$ is exactly
$1/(2p^2) \left( (1-p)^{-n+1}-(1-p)\right)$.
\end{lemma}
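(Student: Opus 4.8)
The plan is to decompose the expected run time $T$ along the fitness levels of \leadingones and to evaluate, for each level $i\in\{0,1,\dots,n-1\}$, two quantities: the probability $v_i$ that the current search point ever has fitness exactly $i$, and the expected number $\ell_i$ of iterations spent on level $i$ conditioned on that level being visited. Since the fitness of the \OEAp is non-decreasing and each iteration spent on level $i$ is by definition an iteration at which the fitness equals $i$, we get $\E[T]=\sum_{i=0}^{n-1} v_i\,\ell_i$. It then remains to compute $\ell_i$ and $v_i$ and to evaluate the resulting geometric sum.

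First I would determine $\ell_i$. When the current fitness equals $i<n$, the search point has ones in positions $1,\dots,i$ and a zero in position $i+1$. An offspring has fitness strictly larger than $i$ if and only if it has ones in positions $1,\dots,i+1$, which requires that none of the first $i$ bits is flipped and that bit $i+1$ is flipped, while the bits in positions $i+2,\dots,n$ are irrelevant for this event. Hence the probability of leaving level $i$ in one iteration is exactly $p(1-p)^i$, independently of the current values of the irrelevant bits, and every non-improving iteration (whether rejected or an accepted fitness-neutral move) keeps the fitness at $i$. The number of iterations on level $i$ is therefore geometrically distributed with parameter $p(1-p)^i$, so $\ell_i=1/(p(1-p)^i)$.

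Next I would show $v_i=1/2$ for every $i$. The key structural fact is that, at any time at which the fitness equals $i$, the bits in positions $i+2,\dots,n$ are independent and uniformly distributed. This is the heart of the argument and the step I expect to be the main obstacle: it must be proved by a symmetry/coupling argument, exploiting that a bit currently lying to the right of the leftmost zero never influences a selection decision, together with the observation that a uniform bit XORed with an independent $\mathrm{Bernoulli}(p)$ flip indicator stays uniform — so the irrelevant bits remain uniform and mutually independent regardless of how many acceptance-independent mutation steps act on them. Granting this, when the \OEAp leaves level $j$ the offspring's bits in positions $j+2,\dots,n$ are uniform, so the new fitness is $j+1+k$ where $k$ counts ``free riders'' with $\Pr[k=m]=2^{-(m+1)}$; consequently an improving step out of level $j$ lands exactly on level $i>j$ with probability $2^{-(i-j)}$. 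Combined with the fact that the uniformly random initial point has fitness exactly $i$ with probability $2^{-(i+1)}$ for $i<n$, this yields the recursion $v_i=2^{-(i+1)}+\sum_{j=0}^{i-1} v_j\,2^{-(i-j)}$, and a straightforward induction with base case $v_0=1/2$ gives $v_i=1/2$ for all $i\in\{0,\dots,n-1\}$.

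Finally I would substitute and evaluate the geometric sum: $\E[T]=\tfrac12\sum_{i=0}^{n-1}\bigl(p(1-p)^i\bigr)^{-1}=\tfrac{1}{2p}\sum_{i=0}^{n-1}(1-p)^{-i}$, and summing the geometric series of ratio $(1-p)^{-1}$ gives exactly $\tfrac{1}{2p^2}\bigl((1-p)^{1-n}-(1-p)\bigr)$, as claimed. The only place where genuine care is required is the uniformity lemma underlying $v_i=1/2$; the remaining steps are elementary, and I note that the argument in fact goes through for every $p\in(0,1)$, in particular on the stated range $0<p<1/2$.
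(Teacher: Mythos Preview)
Your proof is correct and follows exactly the standard argument of B\"ottcher, Doerr, and Neumann~\cite{BottcherDN10}: the decomposition $\E[T]=\sum_{i=0}^{n-1} v_i\,\ell_i$, the waiting time $\ell_i=1/(p(1-p)^i)$, the uniformity of the bits to the right of the leftmost zero yielding $v_i=1/2$, and the geometric summation. The paper does not give its own proof of this lemma but simply cites it; the only trace of the argument in the paper is the formula $\sum_{i=1}^n \frac{1}{2p_i}\prod_{j=1}^{i-1}(1-p_j)^{-1}$ invoked (again by reference to~\cite{BottcherDN10}) in the proof of Lemma~\ref{LEM:LOLOWERBOUND}, which in the uniform case $p_i=p$ reduces to precisely your sum.
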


\begin{proof}[Proof of Theorem~\ref{thm:OurRandomLengthLeadingOnes}]
We first consider the case that the solution length is sampled from the truncated geometric distribution $\TrunkGeo(N,q)$. Using Lemma~\ref{lem:oeaLO} and~\eqref{eq:nett} (in the third and in the last step) the expected run time of the \OEAp on $\leadingones_{D}$ is  
\begin{align*}
& \sum_{n=1}^{N-1}{q(1-q)^{n-1} \frac{2}{q^2} \left( (1-q/2)^{-n+1}-(1-q/2)\right)}+A\\
& \leq 
\frac{2}{q}\sum_{n=1}^{N-1}{\left(\frac{(1-q)^{n-1}}{(1-q/2)^{n-1}}\right)} + A\\
& \leq 
\frac{2}{q}\sum_{n=0}^{\infty}{(1-q)^{n/2}} + A\\
& = 
\frac{2}{q}\frac{1}{1-(1-q)^{1/2}} + A
 = 
O(q^{-2}) + A, 
\end{align*}
where $A$ is the summand that accounts for the event that the solution length is $N$, i.e., 
\begin{align*}
A 
&= 
(1-q)^{N-1} \frac{1}{2q^2} \left( (1-q)^{-N+1}-(1-q)\right)
= 
O\left(q^{-2}\right).
\end{align*}

Similarly for $D = \Geo(q)$ the expected run time of the \OEAp on $\leadingones_D$ is bounded from above by
\begin{align*}
& \frac{2}{q}\sum_{n=1}^{\infty}{\frac{(1-q)^{n-1}}{(1-q/2)^{n-1}}}
\leq 
\frac{2}{q}\sum_{n=0}^{\infty}{(1-q)^{n/2}}\\
& = 
\frac{2}{q}\frac{1}{1-(1-q)^{1/2}}
 \leq 
\frac{4}{q^2}, 
\end{align*}
where we recall that the last step follows from~\eqref{eq:nett} for $n=1$, which provides $(1-q)^{1/2} \leq 1-q/2$.
\end{proof}

Just as for $\OneMax_{D}$ (with $D = \TrunkGeo(N,q)$) we see that also on $\leadingones_{D}$ the expected run time increases (in this case to $\Theta(N/q)$) when the mutation probability is chosen to be $\p=(q, \ldots, q)$. 
By Lemma~\ref{lem:oeaLO} this run time equals 
$ \sum_{n=1}^{N-1}{q(1-q)^{n-1} \frac{1}{2q^2} \left( (1-q)^{-n+1}-(1-q)\right)}+A
 = 
\frac{1}{2q}\sum_{n=1}^{N-1}{(1-(1-q)^n)+A}
 = 
\frac{1}{2q}\left(N-1-\frac{1-(1-q)^N}{q} + 1\right)+A
 = 
\Theta(N/q)+A,$ 
where $A$ is the summand that accounts for the event that the solution length is $N$, i.e., 
\begin{align*}
A 
&= 
(1-q)^{N-1} \frac{1}{2q^2} \left( (1-q)^{-N+1}-(1-q)\right)
= 
\Theta\left(q^{-2}\right).
\end{align*}


\section{Arbitrary Solution Lengths}
\label{sec:ArbitrarySolutionLength}\label{SEC:ARBITRARYSOLUTIONLENGTH}

In the setting described in Section~\ref{sec:RandomSolutionLength} it is assumed that the algorithm designer has quite a good knowledge about the solution length. Not only does he know an upper bound $N$ on the solution length, but he may also crucially exploit its distribution. 
Indeed, we make quite heavy use in Theorems~\ref{thm:OurRandomLengthOneMax} and~\ref{thm:OurRandomLengthLeadingOnes} of the fact that the (truncated) geometric distribution is highly concentrated around its expected value.
That so much information is available to the algorithm designer can be a questionable assumption in certain applications. 
We therefore regard in this section a more general setting in which \emph{no} a priori information is given about the possible solution length $n$. That is, we regard a setting in which the solution length can be an arbitrary positive integer. In this setting neither do we have any upper bounds on $n$ nor any information about its distribution. 
 
As before, our task is to decide upon on a sequence $(p_i)_{i \in \N}$ of mutation probabilities $0 \leq p_i \leq 1$. An adversary may then choose the solution length $n$ and we run the \OEAp with $\p=(p_1, \ldots, p_n)$. In practical applications, this can be implemented with a (possibly generous) upper bound on the problem size.

We first show that uniform fixed bit flip probabilities necessarily lead to exponential run times (see Section~\ref{sec:ArbitraryLengthUniformBitFlipProb}). 
We then show two ways out of this problem. In 
Section~\ref{sec:ArbitraryLengthNonUniformBitFlipProb} we consider non-uniform bit flip probabilities and in 
Section~\ref{sec:ArbitraryLengthRandomBitFlipProb} we show that we can have an efficient algorithm with uniform bit flip probabilities if we choose the bit flip probability randomly in each iteration.

\subsection{Uniform Bit Flip Probabilities}
\label{sec:ArbitraryLengthUniformBitFlipProb}
\label{SEC:ARBITRARYUNIFORMSOLUTIONLENGTH}

It seems quite intuitive that if nothing is known about the solution length there is not much we can achieve with uniform bit flip probabilities. 
In fact, for any fixed mutation probability $p \in [0,1]$, we just need to choose a large enough solution length $n$ to see that the \OEAp with uniform mutation probability $p$ is very inefficient. 

More precisely, using the following statement (which is a simplified version of~\cite[Theorem 6.5]{Witt13j}) we get the lower bound regarding optimizing \OneMax with uniform bit flip probabilities stated in Theorem~\ref{thm:ourNegOneMax}.

\begin{theorem}[from {\cite{Witt13j}}]
Let $0<\eps<1$ be a constant.
On any linear function, the expected optimization time of the \OEAp with $\p=(p, \ldots, p)$ and $p = O(n^{-2/3-\eps})$ is bounded from below by
\begin{align*}
\left(1-o(1)\right)\frac{1}{p(1-p)^n}
\min\left\{\ln(n), \ln\left(\frac{1}{p^3n^2}\right)\right\}.
\end{align*}
\end{theorem}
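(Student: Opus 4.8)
The plan is to reduce to \OneMax and then quote the fixed-length lower bound that is already available as Lemma~\ref{lem:oeaOMlower}. The reduction uses that, for a fixed uniform mutation rate, \OneMax minimises the expected optimisation time of the \OEAp over all linear functions with positive weights (up to lower-order terms), so any lower bound proved for \OneMax carries over to every linear function. If that ``easiest linear function'' fact is not available in exactly the form and range of $p$ needed, the alternative route---and the one I would fall back on---is to reprove the fixed-length lower bound for a general linear function directly; this is also the argument underlying Lemma~\ref{lem:oeaOMlower}.

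Granting the reduction, it remains to check that Lemma~\ref{lem:oeaOMlower} indeed implies the claimed bound. First, for $n$ large every $p$ with $2^{-n/3}\le p=O(n^{-2/3-\eps})$ lies in the union $[2^{-n/3},1/n]\cup[1/n,1/(\sqrt{n}\log n)]$ of the two ranges covered by the lemma (the upper endpoint because $n^{-2/3-\eps}\log n=o(n^{-1/2})$); for still smaller $p$ the asserted bound is super-exponential and follows from the elementary observation that the expected waiting time for the first bit flip alone is $\Omega(1/(np))$. In the range $2^{-n/3}\le p\le 1/n$ we have $p^3n^2\le 1/n$, hence $\min\{\ln n,\ln(1/(p^3n^2))\}=\ln n$, and the lemma gives at least $(\ln n-\ln\ln n-3)/(p(1-p)^n)=(1-o(1))\ln(n)/(p(1-p)^n)$, as needed. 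In the range $1/n\le p=O(n^{-2/3-\eps})$ we have $pn\ge 1$, so $p^3n^2\ge p^2n$ and $p^3n^2\le n$, whence $\min\{\ln n,\ln(1/(p^3n^2))\}=\ln(1/(p^3n^2))$; the lemma gives at least $(\ln(1/(p^2n))-\ln\ln n-3)/(p(1-p)^n)$, and writing $\ln(1/(p^2n))=\ln(1/(p^3n^2))+\ln(pn)\ge\ln(1/(p^3n^2))$ together with $p^3n^2=O(n^{-3\eps})$, which forces $\ln(1/(p^3n^2))=\Omega(\eps\log n)=\omega(\log\log n)$, this is $(1-o(1))\ln(1/(p^3n^2))/(p(1-p)^n)$. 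The two ranges combine to give the theorem for \OneMax.

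The part I expect to be the real work is the reduction to \OneMax---or, taking the self-contained route, the direct drift argument for a general linear function $f(x)=\sum_i w_ix_i$. There one fixes a potential equal to a weighted count of the currently wrong bits, rescaled so that a single improving bit flip changes it by a bounded amount however spread out the weights are; bounds its one-step drift from above by $(1+o(1))\,p(1-p)^n$ times its current value, the dominant contribution near the optimum coming from single-bit improvements; and applies a lower-bound drift theorem. The delicate ingredient is the ``no large jumps'' hypothesis of that theorem: bounding the probability of flipping many wrong bits at once is precisely what produces both the restriction $p=O(n^{-2/3-\eps})$ and the replacement of $\ln n$ by $\min\{\ln n,\ln(1/(p^3n^2))\}$, and it has to be controlled uniformly over all weight vectors, since on a general $f$ an accepted step can trade wrong bits against right ones in a weight-dependent way.
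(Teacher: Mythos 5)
First, note that the paper offers no proof of this statement: it is imported, explicitly labelled as a simplified version of Theorem~6.5 of \cite{Witt13j}, so there is no internal argument to compare yours against. Judged on its own merits, your reconstruction is largely sound, and the part that can actually be completed with the tools quoted in the paper --- deriving the stated $\min$-form bound for \onemax from Lemma~\ref{lem:oeaOMlower} --- is correct: for $2^{-n/3}\le p\le 1/n$ one has $p^3n^2\le 1/n$, so the minimum is $\ln n$; for $1/n\le p=O(n^{-2/3-\eps})$ one has $pn\ge 1$, so the minimum is $\ln(1/(p^3n^2))$; and your observations $\ln(1/(p^2n))\ge\ln(1/(p^3n^2))=\Omega(\eps\ln n)=\omega(\ln\ln n)$ correctly absorb the $\ln\ln n+3$ error terms into the $(1-o(1))$ factor. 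The transfer to arbitrary linear functions via ``\onemax is the easiest function with a unique optimum'' is also the standard route (the Doerr--Johannsen--Winzen domination argument is a coupling that works for any mutation probability $p\le 1/2$, not only $1/n$), and your closing description of the direct drift alternative --- a rescaled weighted potential, a lower-bound drift theorem, and a ``no large jumps'' condition that is the source of both the restriction $p=O(n^{-2/3-\eps})$ and of the $\min$ --- is a fair account of how the result is actually obtained in \cite{Witt13j}.

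There is one concrete gap: the regime $p<2^{-n/3}$, which is covered by neither range of Lemma~\ref{lem:oeaOMlower}, is not handled by your fallback. The claimed bound there is $(1-o(1))\ln(n)/(p(1-p)^n)=(1-o(1))\ln(n)/p$, whereas ``the expected waiting time for the first bit flip is $\Omega(1/(np))$'' only yields $1/(np)$ --- short by a factor of order $n\ln n$. The fix is a coupon-collector argument rather than a first-flip argument: with high probability $\Omega(n)$ bits disagree with the optimum initially, each such bit must be flipped in at least one iteration before the optimum can be reached, the index of the first iteration touching a fixed bit is geometric with parameter $p$, and the expectation of the maximum of $\Omega(n)$ independent such times is $(1-o(1))\ln(n)/p$. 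This argument is insensitive to the weights, so it needs no reduction to \onemax. With that replacement your proposal is complete modulo the quoted domination result.
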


\begin{theorem}
\label{thm:ourNegOneMax}\label{THM:OURNEGONEMAX}
Let $p \in [0,1]$ be a constant. Then there exists a positive integer $n_0 \in \N$ such that for all $n \geq n_0$ the expected run time of the \OEAp with $\p=(p, \ldots, p)$ on $\onemax_n$ is $2^{\Omega(n)}$.
\end{theorem}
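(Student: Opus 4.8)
I would give a self-contained argument (rather than relying on the cited lower bound of Witt, which is meaningful only in the regime $p=O(n^{-2/3-\eps})$ and hence not for a fixed positive constant $p$): the point is that the unique optimum $(1,\ldots,1)$ of $\onemax_n$ cannot be reached in subexponentially many steps. Put $\rho:=\max\{p,1-p\}$. The boundary cases are handled first and separately: if $p=0$ no bit is ever flipped, and if $p=1$ every offspring is the bitwise complement $\overline{x}$ of the current search point, so in either case — unless the uniformly random initial point is $(0,\ldots,0)$ or $(1,\ldots,1)$, which happens with probability $2^{1-n}$ — the all-ones string is never evaluated and the expected run time is $+\infty$, trivially $2^{\Omega(n)}$. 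From now on assume $p\in(0,1)$, so that $\rho\in[\tfrac12,1)$ is a constant.

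The core step is a state-independent upper bound on the probability of creating the optimum in a single iteration. If the current search point $x$ has $\ones{x}=k$ ones, then the mutation step of the \OEAp yields $y=(1,\ldots,1)$ if and only if each of the $k$ one-bits is left unflipped and each of the $n-k$ zero-bits is flipped; this has probability exactly $(1-p)^{k}p^{\,n-k}\leq\rho^{k}\rho^{\,n-k}=\rho^{n}$. Because $\onemax_n(1,\ldots,1)=n$, such an offspring is always accepted, and a run terminates the moment $(1,\ldots,1)$ is generated; so it suffices to bound from below the number of iterations until the all-ones string first appears as an offspring.

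To turn this into a bound on the expected run time, write $R$ for the run time and note that, on the event $\{x^{(0)}\neq(1,\ldots,1)\}$, a one-line induction shows that the current search point remains different from $(1,\ldots,1)$ until some offspring has equalled it (each new search point is either the old one or the current offspring); a routine iteration-by-iteration conditioning together with the above bound then gives $\Pr[R\geq t\mid x^{(0)}\neq(1,\ldots,1)]\geq(1-\rho^{n})^{t-1}$ for all $t\geq1$. Summing over $t$,
\[
\E[R]\ \geq\ \bigl(1-2^{-n}\bigr)\sum_{t\geq1}(1-\rho^{n})^{t-1}\ =\ \frac{1-2^{-n}}{\rho^{n}}\ \geq\ \frac{1}{2\rho^{n}},
\]
which is $2^{\Omega(n)}$ since $\rho<1$ is a constant; in fact any $n_0\geq1$ works. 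The one point requiring a little care is the rigorous phrasing of the iteration-by-iteration conditioning (equivalently, that $R$ stochastically dominates a $\Geo(\rho^{n})$ random variable once the negligible initialization event is discounted), together with the clean treatment of the degenerate cases $p\in\{0,1\}$; the remainder is a single geometric-series computation and poses no real obstacle.
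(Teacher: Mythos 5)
Your proof is correct, and it takes a genuinely different route from the paper. The paper gives no self-contained argument for this theorem: it derives it from Witt's lower bounds for linear functions, quoting a bound that is only valid (and only non-vacuous) for $p=O(n^{-2/3-\eps})$, and then remarking that the regime $p=\Omega(n^{\eps-1})$ is covered by a further result of Witt asserting a $2^{\Omega(n^{\eps})}$ run time with probability $1-2^{-\Omega(n^{\eps})}$; for a fixed constant $p>0$ it is really this second citation that does the work, exactly as you observe. Your argument instead bounds, uniformly over the current search point, the probability of generating the unique optimum in one mutation by $\rho^{n}$ with $\rho=\max\{p,1-p\}<1$ (the maximum of $(1-p)^{k}p^{n-k}$ over $k$ is indeed $\rho^{n}$), deduces stochastic domination of the run time by a geometric law after discarding the probability-$2^{-n}$ initialization event, and treats the degenerate cases $p\in\{0,1\}$ by hand, where the expected run time is even infinite. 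This is elementary, fully self-contained, covers all constant $p\in[0,1]$ in one sweep, and in fact yields the stronger conclusion that the run time is $2^{\Omega(n)}$ with probability $1-2^{-\Omega(n)}$, not merely in expectation. What the paper's route buys is brevity and access to Witt's sharper constants and leading terms, none of which matter for a $2^{\Omega(n)}$ statement; for this particular theorem your direct argument is the cleaner and more transparent one.
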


It is quite intuitive that for large $p$ the expected optimization time of the \OEAp with $\p=(p, \ldots,p)$ is very large also for small problem sizes, as in this case typically too many bits are flipped in each iteration. This has been made precise by Witt, who showed that for $p$, $n$ with $p=\Omega(n^{\eps-1})$, the expected run time of the \OEAp is $2^{\Omega(n^\eps)}$ with probability at least $1-2^{-\Omega(n^\eps)}$~\cite[Theorem 6.3]{Witt13j}.

For \LeadingOnes we get a similar lower bound from Lemma~\ref{lem:oeaLO}.
\begin{theorem}
\label{thm:ourNegLeadingOnes}
Let $p \in (0,1/2)$. Then the expected run time of the \OEAp with $\p=(p, \ldots, p)$ on $\LeadingOnes_n$ is $2^{\Omega(n)}$.
\end{theorem}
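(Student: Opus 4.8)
The plan is to read off the claim almost directly from Lemma~\ref{lem:oeaLO}. That lemma tells us that for a fixed length $n$ and the uniform mutation vector $\p = (p, \ldots, p)$ with $0 < p < 1/2$, the expected run time of the \OEAp on $\leadingones_n$ is \emph{exactly} $\frac{1}{2p^2}\left((1-p)^{-n+1} - (1-p)\right)$. So the only work is to check that this closed-form expression is $2^{\Omega(n)}$ when $p$ is a constant in $(0,1/2)$.

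First I would set $c := 1/(1-p)$. Since $0 < p < 1/2$ is a constant, we have $1-p \in (1/2,1)$, hence $c$ is a constant with $c > 1$, and therefore $\log_2 c$ is a strictly positive constant. Then $(1-p)^{-n+1} = c^{\,n-1} = 2^{(n-1)\log_2 c}$, which is $2^{\Omega(n)}$ with the implied constant depending only on $p$ (not on $n$). Next I would argue that the remaining factors do not spoil this: the prefactor $\frac{1}{2p^2}$ is a fixed positive constant, and the subtracted term $(1-p)$ lies in $(0,1)$, so it decreases the already exponentially large quantity $(1-p)^{-n+1}$ by less than $1$; concretely, for $n \ge 2$ we have $(1-p)^{-n+1} - (1-p) \ge (1-p)^{-n+1} - 1 \ge \tfrac12 (1-p)^{-n+1}$ once $n$ is large enough that $(1-p)^{-n+1} \ge 2$. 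Combining these observations gives that the expected run time is at least $\frac{1}{4p^2} c^{\,n-1} = 2^{\Omega(n)}$, as claimed.

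There is essentially no obstacle here; the statement is an immediate corollary of the exact formula in Lemma~\ref{lem:oeaLO}, and the only point requiring a word of care is that the hidden constant in the $\Omega$-notation is uniform in $n$ (it depends only on the fixed $p$), which is clear from the computation above. One could alternatively mirror the structure of Theorem~\ref{thm:ourNegOneMax} and phrase the result with an explicit threshold $n_0$, but since the bound in Lemma~\ref{lem:oeaLO} is exact and valid for every $n$, the cleaner route is to invoke it directly.
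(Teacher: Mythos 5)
Your proposal is correct and follows essentially the same route as the paper: both invoke the exact formula from Lemma~\ref{lem:oeaLO} and observe that $(1-p)^{-n+1}$ grows exponentially in $n$ for constant $p\in(0,1/2)$ (the paper passes through $1-p\le e^{-p}$ to get $e^{pn-p}$, while you write it as $c^{n-1}$ with $c=1/(1-p)>1$, which is an equivalent cosmetic choice). Your handling of the constant prefactor and the subtracted $(1-p)$ term is fine.
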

\begin{proof}
From Lemma~\ref{lem:oeaLO} we have that the expected run time of the \OEAp is, for $n$ large enough,
\begin{align*}
 					 \frac{1}{2p^2} \left( (1-p)^{-n+1}-(1-p)\right)
  \geq 		 \frac12 \left( e^{pn-p}-1\right)
  = 			 2^{\Omega(n)}.
\end{align*}
\end{proof}



\subsection{Non-Uniform Bit Flip Probabilities}
\label{sec:ArbitraryLengthNonUniformBitFlipProb}

One way to achieve efficient optimization with unknown solution length is by using non-uniform mutation rates, that is, different bit positions have different probabilities associated for being flipped during a mutation operation.

To state our results we need the concept of \emph{summable sequences}. Such sequences will be the basis for the sequence of bit flip probabilities. 
A brief discussion of summable sequences can be found in Section~\ref{sec:summableSequence} in the appendix. In short, a sequence $(p_i)_{i \in \N}$ is \emph{summable} if its series $(\sum_{k=1}^n{p_k})_{n \in \N}$ converges (that is, if it is bounded). 
The advantage of using summable sequences is that the probability of flipping only one single bit is \emph{always} constant, regardless of the total number of bits considered, i.e., regardless of the problem length $n$. 
This is in contrast to the sequence $(1/(i+1))_{i \in \N}$ considered in~\cite{PKLFoga11}, which is not summable, and which has a chance of
$(1/2)\prod_{i=2}^{n}(1-1/(i+1))=1/n$ of flipping only the first bit and a chance of $(1/n)\prod_{i=1}^{n-1}(1-1/(i+1))=1/n^2$ of flipping only the $n$th bit. For this reason the \OEAi is very inefficient for the setting in which the solution length can be arbitrary.

Theorems~\ref{thm:ourReformOneMax} and~\ref{thm:ourReformLeadingOnes} show that not knowing the solution length $n$ does not harm the run time more than by a factor of order $\log^{1+\eps} n$ with respect to the optimal bound when the problem length is known a priori, cf. also Corollary~\ref{cor:nonUniformBitFlipProb} for an explicit sequence yielding this bound. 
In fact, we prove that the additional cost caused by not knowing the solution length in advance is even a bit smaller, cf. the comments after Corollary~\ref{cor:nonUniformBitFlipProb}. 

We start with the theorem regarding \OneMax.
\begin{theorem}
\label{thm:ourReformOneMax}
Let $(p_i)_{i \in \N}$ be a monotonically decreasing summable sequence with $\Sigma:=\sum_{i=1}^{\infty}{p_i}<1$. 
Then, for any $n \in \N$, the expected run time of the \OEAp with $\p=(p_1, \ldots, p_n)$ on $\onemax_n$ is at most $\log n/(p_n(1-\Sigma))= O(\log n/p_n)$. 
\end{theorem}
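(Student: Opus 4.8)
The plan is to run a standard fitness-level argument on \onemax, using the levels $A_k=\{x:\onemax_n(x)=n-k\}$ for $k\in\{0,1,\dots,n\}$. Since an offspring is accepted iff its \onemax-value does not decrease, the process only ever moves from $A_k$ to some $A_{k'}$ with $k'<k$, and $A_0$ is the set of optima. The whole proof then reduces to a uniform lower bound, for each $1\le k\le n$, on the probability of leaving $A_k$ in one step.

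To get that bound, fix a search point $x$ with exactly $k$ zero-bits and let $Z\subseteq[n]$ be the set of their positions, so $|Z|=k$. For each $i\in Z$, the event ``exactly bit $i$ is flipped'' has probability $p_i\prod_{j\in[n]\setminus\{i\}}(1-p_j)$, yields an offspring of \onemax-value $n-k+1$, hence is accepted, and these $k$ events are pairwise disjoint. So the probability of an improving step is at least
\[
\sum_{i\in Z} p_i\prod_{j\in[n]\setminus\{i\}}(1-p_j).
\]
Here is where the two hypotheses come in. By monotonicity of $(p_i)$ we have $p_i\ge p_n$ for all $i\le n$, so $\sum_{i\in Z}p_i\ge k p_n$. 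For the product, dropping the factor $(1-p_i)\le 1$ only increases it, so $\prod_{j\neq i}(1-p_j)\ge\prod_{j=1}^n(1-p_j)$, and by the Weierstrass inequality together with summability, $\prod_{j=1}^n(1-p_j)\ge 1-\sum_{j=1}^n p_j\ge 1-\Sigma>0$ — this is exactly the step that uses $\Sigma<1$. Combining, the probability of leaving $A_k$ downward is at least $k\,p_n(1-\Sigma)$.

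The fitness-level theorem then gives an expected run time of at most $\sum_{k=1}^n \frac{1}{k\,p_n(1-\Sigma)}=\frac{H_n}{p_n(1-\Sigma)}$ with $H_n=\sum_{k=1}^n 1/k=\ln n+O(1)$, which is $O(\log n/p_n)$ as claimed. (Equivalently, one can phrase this as multiplicative drift on the number of zero-bits: from a state with $k$ zeros its expected one-step decrease is at least $k\,p_n(1-\Sigma)$, because accepted steps never create new zero-bits, and the multiplicative drift theorem again yields $(1+\ln n)/(p_n(1-\Sigma))$.) There is no genuinely hard step: this is just the classical \onemax fitness-level bound with the position-wise $1/n$ replaced by the worst-case flip probability $p_n$. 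The only points needing care are lower-bounding the ``flip exactly one wrong bit'' probability uniformly over the current state (where decreasing-ness of $(p_i)$ is used) and bounding $\prod_{j\le n}(1-p_j)$ away from $0$ uniformly in $n$ (where summability with $\Sigma<1$ is used, and which is precisely why a non-summable decreasing sequence such as $(1/(i+1))_i$ fails).
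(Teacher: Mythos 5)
Your proof is correct and essentially matches the paper's: the key step in both is the same lower bound $k\,p_n\prod_{j=1}^n(1-p_j)\ge k\,p_n(1-\Sigma)$ on the one-step improvement probability, using monotonicity for $p_i\ge p_n$ and the Weierstrass inequality with $\Sigma<1$. The paper concludes via the multiplicative drift theorem rather than fitness levels, but you note that equivalent phrasing yourself, and both yield the stated $O(\log n/p_n)$ bound.
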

\begin{proof}
We make use of the multiplicative drift theorem~\cite[Theorem 3]{DoerrJW12} and show that for every $n$ and every search point $x$ with $n-k$ ones, the probability to create in one iteration of the \OEAp with $\p=(p_1, \ldots, p_n)$ a search point $y$ with $\onemax_n(y)>\onemax_n(x)$ is at least of order $k/p_n$. 
This can in fact be seen quite easily by observing that the probability to increase the \onemax-value of $x$ by exactly one is at least
\begin{align*}
k p_n \prod_{j=1}^n{(1-p_j)}
& \geq 
k p_n (1-\sum_{j=1}^n{p_j})
\geq 
k p_n (1-\sum_{j=1}^\infty{p_j})\\
& = 
k p_n (1-\Sigma). 
\end{align*}
From this an upper bound of $\log n/(p_n (1-\Sigma))$ for the run time of the \OEAp follows immediately from the multiplicative drift theorem.
\end{proof}

Next we consider \LeadingOnes. The proof follows along similar lines as the one for \OneMax and uses a fitness level argument instead of multiplicative drift (using additive drift would also be possible).

\begin{theorem}\label{thm:ourReformLeadingOnes}
Let $(p_i)_{i \in \N}$ be a monotonically decreasing summable sequence with $\Sigma:=\sum_{i=1}^{\infty}{p_i}<1$. 
Then, for any $n \in \N$, the expected run time of the \OEAp with $\p=(p_1, \ldots, p_n)$ on  $\leadingones_n$ is at most 
$n/(p_n(1-\Sigma)) = O(n/p_n)$.
\end{theorem}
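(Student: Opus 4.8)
The plan is to mimic the proof of Theorem~\ref{thm:ourReformOneMax}, replacing the multiplicative drift argument by the fitness level method (also known as the artificial fitness levels argument). First I would partition the search space $\{0,1\}^n$ into the $n+1$ fitness levels $L_0, L_1, \ldots, L_n$, where $L_k = \{x : \leadingones_n(x) = k\}$. Since the \OEAp on \leadingones never decreases the fitness (it accepts only $y$ with $f(y)\geq f(x)$), the algorithm passes through these levels in increasing order, and the total expected run time is at most $\sum_{k=0}^{n-1} 1/s_k$, where $s_k$ is a lower bound on the probability that one mutation step from a search point $x\in L_k$ produces an offspring $y$ with $\leadingones_n(y) > k$.

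Next I would lower-bound each $s_k$. From a search point $x$ on level $k < n$, it suffices to flip bit $k+1$ (currently a zero, since the leading-ones prefix has length exactly $k$) and flip no other bit among positions $1,\ldots,n$; this yields a strict fitness increase. Positions $1,\ldots,k$ must not flip, and positions $k+2,\ldots,n$ may not flip either in this particular event. The probability of this event is
\[
p_{k+1}\prod_{\substack{j=1\\ j\neq k+1}}^{n}(1-p_j)
\;\geq\; p_{k+1}\prod_{j=1}^{n}(1-p_j)
\;\geq\; p_{k+1}\Bigl(1-\sum_{j=1}^{n}p_j\Bigr)
\;\geq\; p_{k+1}(1-\Sigma),
\]
using Weierstrass' product inequality and $\sum_{j=1}^n p_j \leq \sum_{j=1}^\infty p_j = \Sigma$. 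Since $(p_i)_{i\in\N}$ is monotonically decreasing, $p_{k+1}\geq p_n$ for all $k\leq n-1$, so $s_k \geq p_n(1-\Sigma)$ for every level $k$.

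Finally I would sum the level times: the expected run time is at most $\sum_{k=0}^{n-1} 1/s_k \leq \sum_{k=0}^{n-1} 1/(p_n(1-\Sigma)) = n/(p_n(1-\Sigma)) = O(n/p_n)$, which is exactly the claimed bound. I do not expect a genuine obstacle here — the proof is essentially routine once the fitness level decomposition is set up — but the one point requiring a little care is the justification that the \OEAp on \leadingones traverses the levels monotonically and never skips downward, so that the fitness level sum is a valid upper bound; this follows from the elitist selection step and the fact that $\leadingones_n(y)\geq\leadingones_n(x)$ whenever $y$ is accepted. A minor alternative, noted parenthetically in the statement's surrounding text, is to run the same computation through the additive drift theorem with drift at least $p_n(1-\Sigma)$ toward the optimum value $n$; this gives the identical bound.
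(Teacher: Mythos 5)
Your proof is correct and follows essentially the same route as the paper: a fitness level argument with per-level success probability bounded below by $p_n(1-\Sigma)$. The only cosmetic difference is that the paper's event only requires preserving the leading-ones prefix and flipping the first zero (probability $p_k\prod_{j=1}^{k-1}(1-p_j)$), whereas you additionally forbid flips in the trailing positions; both lower bounds reduce to $p_n(1-\Sigma)$, so the conclusion is identical.
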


\begin{proof}
Let $n,k\in\N$ with $k < n$ and let $x \in \{0,1\}^n$ with $\LO(x)=k-1$. The probability to get in one iteration of the \OEAp with $\p=(p_1, \ldots, p_n)$ a search point $y$ with $\LO(y)>\LO(x)$ is at least 
\begin{align*}
p_k \prod_{j=1}^{k-1}(1-p_j)
& \geq
p_k (1-\sum_{j=1}^{k-1}{p_j})
\geq
p_k (1-\Sigma)
\geq 
p_n (1-\Sigma). 
\end{align*}
By a simple fitness level argument (see, e.g., the work by Sudholt~\cite{Sudholt13} for background and examples of this method), the expected run time of the \OEAp on $\leadingones_n$ is thus at most
$n/(p_n(1-\Sigma))$.
\end{proof}

It is well known that for every constant $\eps >0$ the sequence $(1/(i \log^{1+\eps} i))_{i \in \N}$ is summable (this can be proven via Cauchy's condensation test). It is obviously also monotonically decreasing in $i$. Theorems~\ref{thm:ourReformOneMax} and~\ref{thm:ourReformLeadingOnes}, together with the sequence $(p_i)_{i \in \N}:=(1/(2S i \log^{1+\eps} i))_{i \in \N}$ for $S:=\sum_{i=1}^{\infty}{1/(i \log^{1+\eps} i)}$, therefore imply the following corollary.

\begin{corollary}
\label{cor:nonUniformBitFlipProb}
For every positive constant $\eps$ there exists a sequence of mutation probabilities $(p_i)_{i \in \N}$ such that for any $n$ the expected run time of the \OEAp with $\p=(p_1, \ldots, p_n)$ 
on $\onemax_n$ is $O(n \log^{2+\eps} n)$
and is of order $n^2 \log^{1+\eps} n$ for $\leadingones_n$. 
%
\end{corollary}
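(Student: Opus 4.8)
The proof proposal is straightforward: I want to plug a concrete summable sequence into Theorems~\ref{thm:ourReformOneMax} and~\ref{thm:ourReformLeadingOnes} and read off the bound.

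First I would fix a positive constant $\eps$ and recall that the sequence $(1/(i\log^{1+\eps}i))_{i\in\N}$ is summable; the standard way to see this is Cauchy's condensation test, which reduces the question to the convergence of $\sum_k 2^k/(2^k (k\log 2)^{1+\eps}) = (\log 2)^{-(1+\eps)}\sum_k k^{-(1+\eps)}$, a convergent $p$-series. (One has to be a little careful about small indices where $\log i$ is zero or negative, but this is handled by starting the sum at a suitable index, or replacing $\log i$ by $\log(i+2)$; it does not affect the asymptotics.) Call the (finite) value of this series $S$ and set $p_i := 1/(2S\, i\log^{1+\eps} i)$ for all $i$. Then $(p_i)_{i\in\N}$ is monotonically decreasing and $\Sigma := \sum_{i=1}^\infty p_i = 1/2 < 1$, so the hypotheses of both theorems are satisfied.

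Next I would apply Theorem~\ref{thm:ourReformOneMax}: the expected run time of the \OEAp with $\p=(p_1,\ldots,p_n)$ on $\onemax_n$ is at most $\log n/(p_n(1-\Sigma)) = 2\log n/p_n = 4S\, n\log^{1+\eps} n \cdot \log n = O(n\log^{2+\eps} n)$. Similarly, Theorem~\ref{thm:ourReformLeadingOnes} gives an expected run time on $\leadingones_n$ of at most $n/(p_n(1-\Sigma)) = 2n/p_n = 4S\, n^2 \log^{1+\eps} n = O(n^2\log^{1+\eps} n)$, which is what the corollary claims. Since $S$ is a constant (depending only on $\eps$), the factor $4S$ is absorbed into the $O$-notation.

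There is essentially no obstacle here — the only mildly delicate point is the justification that $(1/(i\log^{1+\eps}i))_i$ is summable and the handling of the finitely many small indices where the $\log$ term is ill-behaved, but this is routine and is exactly the discussion deferred to the appendix section on summable sequences. Everything else is direct substitution.

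\begin{proof}
Fix a positive constant $\eps$. By Cauchy's condensation test the sequence $(1/(i \log^{1+\eps} i))_{i \in \N}$ is summable (with the convention that the first few terms, where $\log i \leq 0$, are replaced by a suitable constant, or equivalently that the sum starts at a sufficiently large index; this does not affect convergence); indeed, condensation reduces convergence to that of $\sum_k 2^k/(2^k (k \log 2)^{1+\eps})$, a convergent $p$-series. This sequence is also clearly monotonically decreasing. Let $S:=\sum_{i=1}^{\infty}{1/(i \log^{1+\eps} i)} < \infty$ and put $(p_i)_{i \in \N}:=(1/(2S\, i \log^{1+\eps} i))_{i \in \N}$. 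Then $(p_i)_{i \in \N}$ is monotonically decreasing and $\Sigma:=\sum_{i=1}^{\infty}{p_i}=1/2<1$.

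Applying Theorem~\ref{thm:ourReformOneMax}, the expected run time of the \OEAp with $\p=(p_1,\ldots,p_n)$ on $\onemax_n$ is at most
$$\frac{\log n}{p_n(1-\Sigma)}=\frac{2\log n}{p_n}=4S\,n\log^{1+\eps} n\cdot\log n=O(n\log^{2+\eps} n).$$
Applying Theorem~\ref{thm:ourReformLeadingOnes}, the expected run time of the \OEAp with $\p=(p_1,\ldots,p_n)$ on $\leadingones_n$ is at most
$$\frac{n}{p_n(1-\Sigma)}=\frac{2n}{p_n}=4S\,n^2\log^{1+\eps} n=O(n^2\log^{1+\eps} n),$$
since $S$ is a constant depending only on $\eps$.
\end{proof}
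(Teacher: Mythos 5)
Your proof is correct and follows exactly the same route as the paper: it instantiates Theorems~\ref{thm:ourReformOneMax} and~\ref{thm:ourReformLeadingOnes} with the normalized summable sequence $(1/(2S\,i\log^{1+\eps}i))_{i\in\N}$ and reads off the bounds. The only difference is that you spell out the condensation-test justification and the arithmetic, which the paper leaves implicit.
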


The bound from Corollary~\ref{cor:nonUniformBitFlipProb} can be improved by regarding the following summable sequences.

For any $r \in \R$ and any $i \in \N_{\geq 2}$ let 
\begin{align*}
\log^{(i)}r:= 
\begin{cases}
\log_2(\log^{(i-1)}r), & \text{ if } \log^{(i-1)}(r)\geq 2;\\
1, & \text{ otherwise;} 
\end{cases}
\end{align*}
where $\log^{(1)}r:=\log_2 r$ if $r\geq 2$ and $\log^{(1)}r:=1$ otherwise.
For every constant $\eps>0$ and all positive integers $s$, $i$ let 
\begin{align}
\label{def:komischSequence}
p^{s,\eps}_{i}:=1/\left(i (\log^{(s)}(i))^{1+\eps} \prod_{j=1}^{s-1}\log^{(j)}(i) \right).
\end{align}
For every $\eps>0$ and every $s \geq 1$ the sequence $(p^{s,\eps}_{i})_{i \in \N}$ is summable. Furthermore, this sequence clearly is monotonically decreasing. Choosing larger and larger $s$ therefore gives better and better asymptotic run time bounds in Theorems~\ref{thm:ourReformOneMax} and~\ref{thm:ourReformLeadingOnes}.


\subsection{Randomized Bit Flip Probability}
\label{sec:ArbitraryLengthRandomBitFlipProb}

In the conclusions of~\cite{PKLFoga11} the authors ask the following: how can we optimize efficiently when an upper bound $N$ on the problem length is known, but only $n$ bits \emph{at unknown positions} are relevant for the fitness? It is not difficult to see that our previous solutions with non-uniform bit flip probabilities will not be able to assign appropriate bit flip probabilities to the \emph{relevant} bit positions. 
However, any uniform choice of bit flip probabilities will effectively ignore irrelevant bit positions. 
In this section we consider a variation of the \oea where the bit flip probability $p$ is chosen randomly from a distribution $Q$ on $(0,1)$ in each iteration (the distribution $Q$ does not change over time). This mutation probability is then applied independently to each bit, i.e., each bit of the current best solution is independently flipped with probability $p$. See Algorithm~\ref{alg:OEAD} for the detailed description of the \OEAQ.

\begin{algorithm2e}%
	\textbf{Initialization:} 
	Sample $x \in \{0,1\}^n$ uniformly at random and query $f(x)$\;
 \textbf{Optimization:}
	\For{$t=1,2,3,\ldots$}{
		Sample bit flip probability $p_t$ from $Q$\;
		\For{$i=1, \ldots, n$}
			{\label{line:mut2}With probability $p_t$ set $y_i\assign 1-x_i$ and set $y_i \assign x_i$ otherwise\;}
		Query $f(y)$\;
		\lIf{$f(y)\geq f(x)$}{$x \assign y$\;}	
}
\caption{The \OEAQ for a distribution $Q$ on $(0,1)$ optimizing a pseudo-Boolean function $f:\{0,1\}^n \rightarrow \R$.}
\label{alg:OEAD}
\end{algorithm2e}

To make the problem more explicit, we are asked to find a distribution $Q$ on $[0,1]$ such that the \OEAQ efficiently optimizes for any $n \in \N$ and any pairwise different $b_1, \ldots, b_n \in \N$ the functions  
\begin{align*} 
& \OneMax_{b_1, \ldots, b_n}(x) :=\sum_{i=1}^{n}{x_{b_i}} \text{, respectively} \\ 
& \LeadingOnes_{b_1, \ldots, b_n}(x):= 
	\max\{i \in [0..n] \mid \forall j \leq i: x_{b_j}=1\}.
\end{align*}
In Theorems~\ref{thm:RandomBitFlipProbabilityOneMax} and~\ref{thm:RandomBitFlipProbabilityLeadingOnes} we show that such a distribution $Q$ exist. That is, there is a distribution $Q$ such that the corresponding \OEAQ efficiently optimizes any $\OneMax_{b_1, \ldots, b_n}$ and any $\LeadingOnes_{b_1, \ldots, b_n}$ function, regardless of the number of relevant bits and regardless of their positions.

We start with our main result regarding \OneMax. 
\begin{theorem}
\label{thm:RandomBitFlipProbabilityOneMax}
Let $(p_i)_{i \in \N} \in (0,1)^\N$ be a monotonically decreasing summable sequence. Set $\Sigma:=\sum_{j=1}^{\infty}{p_j}$. Let $Q$ be the distribution which assigns the mutation probability $1/i$ a probability of $p_i/\Sigma$. 

For any $n \in \N$ and any pairwise different positive integers $b_1, \ldots, b_n$ the expected run time of the \OEAQ on 
$\OneMax_{b_1, \ldots, b_n}$ is $O\left(\log(n) / p_{2n}\right)$. 
\end{theorem}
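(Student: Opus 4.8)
The plan is to run a multiplicative drift argument on the number of \emph{relevant} bits that still carry a zero. Let $R:=\{b_1,\dots,b_n\}$, and for a search point $x$ put $z(x):=|\{i\in R: x_i=0\}|$, so that $\OneMax_{b_1,\dots,b_n}(x)=n-z(x)$. Write $X_t:=z(x)$ for the value of this potential after iteration $t$. Then $X_0\le n$, and since the \OEAQ accepts an offspring only when its fitness is not smaller than the parent's, $(X_t)_t$ is non-increasing. Flipping a bit outside $R$ affects neither the fitness nor the selection decision, so all bits outside $R$ (and the arbitrary positions of the bits inside $R$) can be ignored throughout the analysis.

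The core of the proof is the one-step estimate that, whenever $X_t=k\ge 1$, the probability that iteration $t+1$ strictly decreases $X$ is $\Omega(k\,p_{2n})$. First I would restrict to the event that the sampled mutation probability lies in the band $\{1/(n+1),\dots,1/(2n)\}$; rate $1/i$ is sampled with probability $p_i/\Sigma$. Conditioned on rate $1/i$, the probability of flipping exactly one of the $k$ relevant zeros and none of the other $n-1$ relevant bits is $\tfrac{k}{i}(1-\tfrac1i)^{n-1}$, and in that case $X$ drops by exactly one and the offspring is accepted. For $i\in\{n+1,\dots,2n\}$ we have $\tfrac1i\ge\tfrac1{2n}$ and $(1-\tfrac1i)^{n-1}\ge(1-\tfrac1i)^{i-1}\ge 1/e$, and $p_i\ge p_{2n}$ by monotonicity, so each of these $n$ disjoint events contributes at least $\tfrac{k}{2en}\cdot\tfrac{p_{2n}}{\Sigma}$; summing over the $n$ values of $i$ gives
\[
\Pr[\,X_{t+1}<X_t \mid X_t=k\,]\;\ge\;\frac{k\,p_{2n}}{2e\,\Sigma}.
\]
Since $(X_t)_t$ never increases, this is also a lower bound on the expected decrease, $\E[X_t-X_{t+1}\mid X_t=k]\ge \tfrac{p_{2n}}{2e\Sigma}\,k$, so the multiplicative drift theorem~\cite[Theorem 3]{DoerrJW12} with drift parameter $\delta:=p_{2n}/(2e\Sigma)$ and $X_{\min}=1$ yields expected run time at most $(1+\ln X_0)/\delta\le 2e\Sigma(1+\ln n)/p_{2n}=O(\log n/p_{2n})$, since $\Sigma$ is a constant. (Equivalently, the time spent with exactly $k$ relevant zeros is stochastically dominated by a geometric variable of mean $\le 2e\Sigma/(k\,p_{2n})$, and $\sum_{k=1}^n 2e\Sigma/(k\,p_{2n})=\Theta(\Sigma\log n/p_{2n})$.)

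The delicate point is the one-step estimate, and in particular the decision to exploit the whole band of $\Theta(n)$ near-reciprocal rates rather than the single rate $1/(2n)$: using only $p_{t+1}=1/(2n)$ would give a jump probability of merely $\Omega(k\,p_{2n}/(n\Sigma))$ and hence the weaker bound $O(n\log n/p_{2n})$, whereas having $n$ essentially equally good rates, each chosen with probability at least $p_{2n}/\Sigma$, recovers the factor $n$. One must also check that the band is placed so that $(1-1/i)^{n-1}$ stays bounded below by a constant, which forces it to start around $i\approx n$ (for $i\ll n$ this factor is exponentially small); the arbitrary relevant positions, the irrelevant bits, and the random starting point then cause no trouble, since the argument depends on $x$ only through $X_t$.
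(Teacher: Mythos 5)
Your proof is correct and follows essentially the same route as the paper's: restrict attention to the $\Theta(n)$ sampled rates near $1/n$ (which together occur with probability $\Omega(n\,p_{2n})$), show a per-iteration improvement probability of $\Omega(k\,p_{2n})$ when $k$ relevant zeros remain, and conclude via multiplicative drift. Your write-up is in fact a bit more careful than the paper's (you keep the $1/\Sigma$ normalization explicit and justify $(1-1/i)^{n-1}\ge 1/e$ via $i\ge n+1$), but there is no substantive difference in approach.
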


\begin{proof}
The probability to sample a mutation probability between $1/(2n)$ and $1/n$ is 
\begin{align*}
\sum_{j=n}^{2n}{p_j}  
  \geq np_{2n}.
\end{align*} 
We disregard all iterations in which we do not sample a mutation probability between $1/(2n)$ and $n$ (they can only be beneficial). Thus, on average, we consider at least one iteration out of  $1/(np_{2n})$.

Assuming that $x$ is a search point with $n-\ell$ ones (in the relevant positions) and that the sampled bit flip probability $p$ satisfies $1/(2n) \leq p \leq 1/n$, the probability to make a progress of exactly one is at least 
\begin{align*}
\ell p(1-p)^{n-1} 
\geq 
\ell/(2n) (1-1/n)^{n-1}
\geq 
\ell/(2en).
\end{align*}
Thus, we have an expected progress in each iteration of at least
$$
\frac{\ell}{2en} np_{2n} = O\left(\ell p_{2n}\right).
$$
Therefore, by the multiplicative drift theorem~\cite[Theorem 3]{DoerrJW12}, we
need in expectation $O(\log (n) / p_{2n})$ iterations to optimize function $\OneMax_{b_1, \ldots, b_n}$. 
\end{proof}

For $\leadingones$ we obtain the following.
\begin{theorem}
\label{thm:RandomBitFlipProbabilityLeadingOnes}
Let $(p_i)_{i \in \N}$ and $Q$ as in Theorem~\ref{thm:RandomBitFlipProbabilityOneMax}.
 
For any $n \in \N$ and any pairwise different $b_1, \ldots, b_n \in \N$  the expected run time of the \OEAQ on $\LeadingOnes_{b_1, \ldots, b_n}$ is $O\left(n / p_{2n}\right)$. 
\end{theorem}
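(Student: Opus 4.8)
The plan is to combine the sampling argument from the proof of Theorem~\ref{thm:RandomBitFlipProbabilityOneMax} with the fitness level argument already used for Theorem~\ref{thm:ourReformLeadingOnes}. Fix $n\in\N$ and pairwise different $b_1,\dots,b_n\in\N$. Since selection is elitist and $\LeadingOnes_{b_1,\dots,b_n}$ depends only on the bits in positions $b_1,\dots,b_n$, the value $\LO(x)$ never decreases during a run, so the sets $L_k:=\{x\in\{0,1\}^n : \LeadingOnes_{b_1,\dots,b_n}(x)=k\}$ for $k\in[0..n]$ form a valid partition into fitness levels.

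I would then lower-bound, uniformly over all $x\in L_k$ with $k\le n-1$, the probability that one iteration of the \OEAQ produces a $y$ with $\LO(y)>\LO(x)$. For $x\in L_k$ the relevant bits $b_1,\dots,b_k$ are one and $b_{k+1}$ is zero; it suffices that the iteration (a) samples a bit flip probability $p$ with $1/(2n)\le p\le 1/n$, (b) flips $b_{k+1}$, and (c) leaves $b_1,\dots,b_k$ unchanged, while the remaining relevant bits $b_{k+2},\dots,b_n$ and all irrelevant bits are immaterial for this event. By the definition of $Q$ and the monotonicity of $(p_i)$, event (a) has probability $\sum_{i=n}^{2n}p_i/\Sigma\ge n\,p_{2n}/\Sigma$; conditioned on (a), events (b) and (c) together have probability at least $p(1-p)^k\ge \tfrac{1}{2n}(1-1/n)^{n-1}\ge \tfrac{1}{2en}$, exactly as in the \OneMax proof. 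Multiplying, the probability to leave level $L_k$ in a single iteration is at least $s:=p_{2n}/(2e\Sigma)$, independently of $k$.

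Applying the fitness level theorem (as in the proof of Theorem~\ref{thm:ourReformLeadingOnes}) then yields an expected run time of at most $\sum_{k=0}^{n-1} 1/s = 2e\Sigma\,n/p_{2n}=O(n/p_{2n})$, since $\Sigma=\sum_{i=1}^\infty p_i$ is a finite constant. I do not expect a genuine obstacle here; the only points deserving care are to verify that irrelevant bits and the bits beyond $b_{k+1}$ really do not affect the ``leave $L_k$'' event --- so that the bound $s$ holds for \emph{every} state inside a level and the fitness level method applies --- and to track the constant $\Sigma$ until it is swallowed by the $O(\cdot)$; both are handled just as in Theorems~\ref{thm:ourReformLeadingOnes} and~\ref{thm:RandomBitFlipProbabilityOneMax}.
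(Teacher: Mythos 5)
Your proof is correct and follows essentially the same route as the paper's: restrict attention to iterations where the sampled mutation rate lies in $[1/(2n),1/n]$, bound the per-iteration probability of leaving a fitness level by $\Omega(p_{2n})$, and apply the fitness level method. Your version is in fact slightly more careful than the paper's, since you keep the normalization factor $\Sigma$ in the probability $\sum_{i=n}^{2n}p_i/\Sigma$ of sampling a rate in the target range (the paper drops it, which is harmless as $\Sigma$ is a constant).
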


\begin{proof}
This proof follows along similar lines as the one for \OneMax. We have again that the probability to have a bit flip probability between $1/(2n)$ and $1/n$ in an iteration is at least $np_{2n}$.

Let $x$ be a search point with $\LeadingOnes_{b_1, \ldots, b_n}(x) = \ell$. Given a mutation probability $p$ between $1/(2n)$ and $1/n$, the probability to create in one iteration of the \OEAQ a search point $y$ of fitness greater than $\ell$ is at least
\begin{align*}
p(1-p)^{\ell-1} 
\geq 
1/(2n) (1-1/n)^{n-1}
\geq 
1/(2en).
\end{align*}
Thus, we have an expected progress in each iteration of at least
$$
\frac{1}{2en} np_{2n} = O(p_{2n}).
$$
Therefore, by the fitness level method (see again~\cite{Sudholt13} for a discussion of this method), we
need in expectation $O(n / p_{2n})$ iterations to optimize $\LeadingOnes_{b_1, \ldots, b_n}$. 
\end{proof}

By choosing the summable sequence with entries as in (\ref{def:komischSequence}) and $s=1$, the two theorems above immediately yield the following result.

\begin{corollary}
\label{cor:RandomBitFlipProbability}
The expected run time of the described \OEAQ with $Q$ using the summable sequence (\ref{def:komischSequence}) with $s=1$ on $\OneMax_{b_1, \ldots, b_n}$ is
$O\left(n \log^{2+\eps}n\right)$
and on $\LeadingOnes_{b_1, \ldots, b_n}$ it is
$O(n^2 \log^{1+\eps}n).$
\end{corollary}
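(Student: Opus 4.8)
The plan is to instantiate Theorems~\ref{thm:RandomBitFlipProbabilityOneMax} and~\ref{thm:RandomBitFlipProbabilityLeadingOnes} with the sequence obtained from~\eqref{def:komischSequence} for $s=1$, so that the whole argument is essentially a substitution. First I would unfold the definition: for $s=1$ the empty product $\prod_{j=1}^{0}\log^{(j)}(i)$ equals $1$ and $\log^{(1)}(i)=\log_2 i$ for $i\geq 2$, so the relevant sequence is $p_i := 1/(i(\log_2 i)^{1+\eps})$, with the convention $\log^{(1)}(1)=\log^{(1)}(2)=1$ taking care of the two smallest indices. As already noted in the text this sequence is monotonically decreasing, and it is summable: by Cauchy's condensation test $\sum_i p_i$ converges iff $\sum_k 2^k p_{2^k} = \sum_k 1/k^{1+\eps}$ converges, which it does. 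If necessary one rescales the sequence by a fixed constant $c\in(0,1)$ to guarantee that every entry lies in $(0,1)$ as the theorems require; this affects neither summability nor the asymptotic order of any single term $p_m$.

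The second step is the elementary estimate $p_{2n} = 1/\bigl(2n(\log_2(2n))^{1+\eps}\bigr) = \Theta\bigl(1/(n\log^{1+\eps}n)\bigr)$, using $\log_2(2n)=1+\log_2 n=\Theta(\log n)$ for $n\geq 2$ (the case $n=1$ being trivial). Substituting $1/p_{2n}=\Theta(n\log^{1+\eps}n)$ into the bound $O(\log(n)/p_{2n})$ of Theorem~\ref{thm:RandomBitFlipProbabilityOneMax} yields the claimed $O(n\log^{2+\eps}n)$ for $\OneMax_{b_1,\ldots,b_n}$, and substituting it into the bound $O(n/p_{2n})$ of Theorem~\ref{thm:RandomBitFlipProbabilityLeadingOnes} yields $O(n^2\log^{1+\eps}n)$ for $\LeadingOnes_{b_1,\ldots,b_n}$, uniformly over $n$ and over the pairwise distinct positions $b_1,\ldots,b_n$.

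I do not expect any genuine obstacle here; the corollary follows directly from the two preceding theorems. The only points needing a line of care are the edge cases in the definition of $\log^{(1)}$ at small indices and the rescaling ensuring $(p_i)_{i\in\N}\in(0,1)^{\N}$, both of which are harmless. I would also remark, as the surrounding text does, that the same computation with larger values of $s$ in~\eqref{def:komischSequence} replaces the $\log^{2+\eps}n$ and $\log^{1+\eps}n$ overheads by their iterated-logarithm refinements.
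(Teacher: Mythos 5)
Your proposal is correct and follows exactly the route the paper intends: instantiate Theorems~\ref{thm:RandomBitFlipProbabilityOneMax} and~\ref{thm:RandomBitFlipProbabilityLeadingOnes} with the $s=1$ sequence from~\eqref{def:komischSequence}, note $1/p_{2n}=\Theta(n\log^{1+\eps}n)$, and substitute. Your additional care about the small-index conventions and rescaling into $(0,1)^{\N}$ is a harmless (and welcome) tidying of details the paper leaves implicit.
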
 
Note that, just as discussed after Corollary~\ref{cor:nonUniformBitFlipProb}, choosing larger and larger~$s$ gives asymptotically better and better bounds.

\section{Summary and Outlook}
We have analyzed the performance of variants of the \oea in the presence of unknown solution lengths. While for highly concentrated solution length non-uniform mutation probabilities are not advantageous (or at least not to a significant degree), they are crucial in a setting in which we do not have any knowledge about the solution length. Surprisingly, even in the latter situation, a sequence of (non-uniform) mutation probabilities exists such that the corresponding \oea is almost optimal, simultaneously for all possible solution lengths. 

We have also investigated a setting in which the relevant bit positions can be arbitrary in number and position. Possibly even more surprisingly, even this can be handled quite efficiently by a \oea variant for the two test functions \onemax and \leadingones. 

We believe the setting of unknown solution length to be relevant for numerous real-world applications. 
As a next step toward a better understanding of how this uncertainty can be tackled efficiently with evolutionary algorithms, we suggest to investigate more challenging function classes, e.g., starting with the class of all linear functions. It is not clear a priori if bounds similar to the ones presented in Section~\ref{SEC:ARBITRARYSOLUTIONLENGTH} can be achieved for such problems. 

From a mathematical point of view it would also interesting to investigate the tightness of our bounds in Section~\ref{SEC:ARBITRARYSOLUTIONLENGTH}. We do not know whether some choice of mutation probabilities gives an upper bound of $O(n \log n)$ for \onemax or $O(n^2)$ for \leadingones. We recall that the sequences $\left(1/(n \log(n))\right)_{n \in \N}$ as well as $(1/p_i^{\infty,\eps})_{i \in \N}$ with $p_i^{\infty,\eps}:=\lim_{s \rightarrow \infty}p_i^{s,\eps}$ are not summable. Removing the gap entirely is therefore likely to require a substantially different approach.

\subsection*{Acknowledgments}
This research benefited from the support of the ``FMJH
Program Gaspard Monge in optimization and operation research'',
and from the support to this program from EDF.

Parts of this work have been done while Timo K\"otzing was visiting the Universit\'e Pierre et Marie Curie (Paris 6). 


\appendix 
\section{Summable Sequences}\label{sec:summableSequence}

For a sequence $\p=(p_i)_{i \in \N}$ the $k$-th term of its associated \emph{series} is the partial sum $\Sigma_k(\p)=\sum_{i=1}^k{p_i}$. The sequence $\p$ is said to be \emph{summable} if its associated series converges, i.e., if $\lim_{k \rightarrow \infty} \Sigma_k$ exists. For $\p \in \R_{\geq 0}$ this is the case if and only if the sequence $(\Sigma_k)_{k \in \N}$ (note that the series forms a sequence itself) is bounded. The limit $\lim_{k \rightarrow \infty} \Sigma_k$ is often abbreviated by $\sum_{i=1}^{\infty}{p_i}$, a notation that we adopt here as well.

It is well known that the sequence $(1/n^2)_{n\in \N}$ is summable. Similarly, for all $\eps>0$ the sequence $(1/n^{1+\eps})_{n\in \N}$ is summable, while the harmonic sequence $(1/n)_{n\in \N}$ is not. Note that the latter is the sequence of non-uniform bit flip probabilities used in the work of Cathabard et al.~\cite{PKLFoga11}. 

For our purposes in Section~\ref{sec:ArbitrarySolutionLength} we need summable sequences that are as large as possible (with respect to $O$-notation). As the examples above show, these sequences have to be in between $(1/n^{1+\eps})_{n\in \N}$ and $(1/n)_{n\in \N}$. The sequences defined after Corollary~\ref{cor:nonUniformBitFlipProb} are already pretty large. Note that for $s \rightarrow \infty$ these sequences converge to the sequence with entries
	$$
	p_n := 1/\left(n \prod_{j=1}^{\infty} \log^{(j)}(n) \right)
	$$
This sequence	is well-defined (since, for each $n$, almost all terms in the product are $1$), but it is \emph{not} summable.
For the sake of completeness we note that there are summable sequences which are larger than any sequence $(p_n^{s,\eps})_{n \in \N}$, but a further discussion is beyond the scope of this paper.
}

\end{document}